\documentclass{article} 
\usepackage[svgnames]{xcolor}
\PassOptionsToPackage{pdfborder={0 0 0}, colorlinks=true, linkcolor=DarkGreen, citecolor=DarkBlue}{hyperref}
\usepackage{iclr2026/iclr2026_conference}
\usepackage{times}
\usepackage{amsmath, amsfonts, amssymb}
\usepackage{amsthm}
\usepackage{thmtools}
\usepackage{xspace}
\usepackage{graphicx}
\usepackage{subcaption}
\usepackage{pifont}
\usepackage{listings}
\usepackage{booktabs}
\usepackage[ruled,vlined]{algorithm2e}

\lstset{
language=Python,
basicstyle=\footnotesize\ttfamily
}


\iclrfinalcopy


\newcommand*{\argmax}{\operatornamewithlimits{argmax}}
\newcommand*{\argmin}{\operatornamewithlimits{argmin}}

\newcommand*{\field}[1]{\mathbb{\MakeUppercase{#1}}}		
\newcommand*{\set}[1]{{\mathcal{\MakeUppercase{#1}}}}			
\newcommand*{\norm}[1]{\lVert #1 \rVert}	
\newcommand*{\flexnorm}[1]{\left\lVert #1 \right\rVert}
\newcommand*{\inner}[1]{\langle #1 \rangle}	
\newcommand*{\flexinner}[1]{\left\langle #1 \right\rangle}	
\newcommand*{\collection}[1]{{\mathfrak{\MakeUppercase{#1}}}} 
\newcommand*{\card}[1]{\lvert #1 \rvert}

\newcommand*{\R}{\field{R}} 
\newcommand*{\N}{\field{N}} 

\newcommand*{\sign}{\operatorname{sign}}

\renewcommand{\vec}[1]{{\boldsymbol{\mathbf{#1}}}}
\newcommand*{\mat}[1]{\vec{\MakeUppercase{#1}}}
\newcommand*{\eye}{\mat{I}}							
\newcommand*{\transpose}{\mathsf{T}}

\newcommand*{\idop}{\operator{I}}


\newcommand*{\lpspace}[1]{\set{L}_{#1}}


\newcommand*{\operator}[1]{{{\MakeUppercase{#1}}}}

\newcommand*{\dataset}{\set{D}} 				
\newcommand*{\observation}{y} 					
\newcommand*{\observations}{\vec{\observation}}
\newcommand*{\obsnoise}{\epsilon}

\newcommand*{\gp}{\mathcal{GP}}
\newcommand*{\gpmean}{\hat{\objective}}


\newcommand*{\kernel}{k}

\newcommand*{\Kernel}{\mat{\kernel}}


\newcommand*{\expectation}{\mathbb{E}}
\newcommand*{\Ex}[2][]{\expectation_{#1}\left[ #2 \right]}

\newcommand*{\Pspace}{\set{P}} 					

\newcommand*{\prob}[1]{\mathbb{P}\left[ #1 \right]}

\newcommand*{\D}{\mathbb{D}}
\newcommand*{\divergence}[2]{\D(#1||#2)}

\newcommand*{\kl}[2]{\D_\mathrm{KL}(#1||#2)}
\newcommand*{\entropy}[1]{\mathbb{H}[#1]}
\newcommand*{\normal}{\mathcal{N}}					

\newcommand*{\indicator}{\mathbb{I}} 				
\newcommand*{\diff}{{\mathop{}\operatorname{d}}}



\newcommand*{\filtration}{\collection{F}}

\newcommand*{\qfamily}{\set{Q}}
\newcommand*{\prior}{p_0}

\newcommand*{\Hspace}{\set{H}} 						
\newcommand*{\feature}{\phi}
\newcommand*{\features}{\operator{\Phi}}


\newcommand*{\Hessian}{\operator{H}}

\newcommand*{\af}{a} 						
\newcommand*{\Sspace}{\set{S}} 				
\newcommand*{\regret}{r}					

\newcommand*{\objective}{f}

\newcommand*{\slocation}{x} 
\newcommand*{\location}{\vec{\slocation}} 
\newcommand*{\domain}{\set{X}} 
\newcommand*{\nbatch}{B}
\newcommand*{\batch}{\set{B}}

\newcommand*{\vocab}{\set{V}}

\newcommand*{\model}{g}

\newcommand*{\parameter}{\theta}
\newcommand*{\parameters}{{\parameter}}
\newcommand*{\paramdim}{M}
\newcommand*{\paramspace}{\Theta} 

\newcommand*{\weight}{w}    

\newcommand*{\loss}{\ell}
\newcommand*{\Loss}{L}

\newcommand*{\lossfactor}{\alpha_\loss}
\newcommand*{\smoothness}{\eta}

\newcommand*{\gap}{\Delta}
\newcommand*{\observer}{m}
\newcommand*{\observers}{\operator{M}}
\newcommand*{\observerset}{\set{M}}

\newcommand*{\basemeasure}{\mu}


\newcommand*{\score}{S}
\newcommand*{\mlabel}{z}                        

\newcommand*{\vnoise}{\xi}
\newcommand*{\vncov}{\Sigma}

\newcommand*{\unitf}{\iota}


\newcommand*{\pflip}{p_{\mathrm{flip}}}
\newcommand*{\PL}{{\mathrm{PL}}}
\newcommand*{\rPL}{{\mathrm{rPL}}}

\newcommand*{\regfactor}{\lambda}
\newcommand*{\regfun}{R}

\newcommand{\utility}{u}

\newcommand{\udataset}{\dataset^{\utility}}

\newcommand*{\clabel}{z}    
\newcommand*{\thresh}{\tau}    
\newcommand*{\classifier}{\pi}
\newcommand*{\context}{c}

\newcommand*{\proposal}{q}

\newcommand*{\reward}{\rho}
\newcommand*{\refmodel}{\proposal_{\mathrm{ref}}}
\newcommand*{\normfactor}{\zeta}
\newcommand*{\temperature}{\eta}


\newcommand*{\iteridx}{t}

\newcommand*{\niter}{T}
\newcommand*{\nobs}{n}



\newcommand*{\anotherspace}{\set{Y}}

\newcommand*{\bigo}{\set{O}}        



\newcommand*{\bound}{b}		


\newcommand*{\anyconstant}{c}

\newcommand*{\anyscalar}{s}

\newcommand*{\anyfunction}{h}

\def\mmiddle#1{\mathrel{}\middle#1\mathrel{}} 


\newcommand*{\iid}{i.i.d.\xspace}

\declaretheorem{assumption}

\declaretheorem[style=remark]{remark}
\declaretheorem{lemma}
\declaretheorem{definition}

\usepackage{hyperref}
\usepackage{url}
\usepackage{cleveref}

\title{Generative Bayesian optimization:\\generative models as acquisition functions} 

\author{Rafael Oliveira, Daniel M. Steinberg \& Edwin V. Bonilla \\
CSIRO's Data61, Australia\\
\scriptsize\texttt{\{rafael.dossantosdeoliveira, dan.steinberg, edwin.bonilla\}@data61.csiro.au} \\
}

%

\begin{document}

\maketitle

\begin{abstract}
We present a general strategy for turning generative models into candidate solution samplers for batch Bayesian optimization (BO). The use of generative models for BO enables large batch scaling as generative sampling, optimization of non-continuous design spaces, and high-dimensional and combinatorial design. Inspired by the success of direct preference optimization (DPO), we show that one can train a generative model with noisy, simple utility values directly computed from observations to then form proposal distributions whose densities are proportional to the expected utility, i.e., BO's acquisition function values. Furthermore, this approach is generalizable beyond preference-based feedback to general types of reward signals and loss functions. This perspective avoids the construction of surrogate (regression or classification) models, common in  previous methods that have used generative models for black-box optimization. Theoretically, we show that the generative models within the BO process follow a sequence of  distributions which asymptotically approximate an optimal target under certain conditions. We also evaluate the performance through experiments on challenging optimization problems involving large batches in high dimensions. 
\end{abstract}

\section{Introduction}
Bayesian optimization (BO) has been a successful approach to solve complex black-box optimization problems by making use of probabilistic surrogate models, such as a Gaussian processes (GPs) \citep{Rasmussen2006}, and their uncertainty estimates \citep{Shahriari2016, Garnett2023book}. BO methods have been particularly useful in areas such as hyper-parameter tuning for machine learning algorithms \citep{Snoek2012}, material design \citep{Frazier2016}, and robot locomotion \citep{Calandra2016}. The core idea of BO is to apply a Bayesian decision-theoretic framework to make optimal choices by maximizing an expected utility criterion, also known as an acquisition function. The corresponding expectations are taken under a Bayesian posterior over the underlying objective function. Thus, the Bayesian model provides a principled way to account for the uncertainty inherent to the limited amount of data and the noisy observations. 

In many applications such as simulated scenarios \citep{Azimi2010}, one is able to run multiple evaluations of the objective function in parallel, even though the simulations themselves might be expensive to run. 
Common BO approaches to these batch settings incrementally build a set of candidates by sampling ``fantasy'' observations from the probabilistic model and conditioning on them before selecting the next candidate in the batch \citep{Wilson2018}. Although near-optimal batches can be selected this way, this approach is not scalable to very large batches in high-dimensional spaces, such as problems in protein design \citep{stanton2022accelerating, Gruver2023}. 

One of the most promising alternatives to batch BO has been to train a generative model as a proposal distribution informed by the acquisition function and then sample a batch from the learned proposal \citep{Brookes2019cbas, stanton2022accelerating, Gruver2023, Steinberg2025variational}. This approach comes with several advantages. Firstly, given a trained generative model, sampling is usually inexpensive. 
Secondly, 
existing general-purpose generative models can be used and fine-tuned for the optimization task at hand. Lastly, sampling avoids estimating the global optimum of an acquisition function, which can be hard. 
However, 
existing generative approaches to black-box optimization usually rely on fitting a surrogate (regression or classification) model first then training a generative model on top of it \citep{stanton2022accelerating, Gruver2023, Steinberg2025variational}. 
This two-stage process compounds approximation errors from both models and 
can increase the computational cost significantly when compared to having a single model. 

In this paper we present a general framework for learning generative models for batch Bayesian optimization tasks that requires a single model without the need for additional probabilistic regression or classification surrogates. Our approach for generative BO (GenBO) encodes general utility functions into training objectives for generative models directly. We focus on two cases, one where we train the model via a loss function for a reward model analogously to the direct preference optimization (DPO) formulation for large language models \citep{Rafailov2023dpo}, and the second one where we train the generative model through divergence minimization, using utilities as part of sample weights. We present theoretical analyses on the convergence of approximations and empirical results on practical applications involving high-dimensional combinatorial optimization problems.

\section{Background}
\label{sec:background}

We consider the problem of estimating the global optimum of an objective function $\objective:\domain\to\R$ as:
\begin{equation}
    \location^*\in\argmax_{\location\in\domain} \objective(\location)\,,
    \label{eq:problem}
\end{equation}
where $\objective$ is an expensive-to-evaluate black-box function, i.e., $\nabla_\location \objective$ is unavailable. We can only observe $\objective(\location)$ via noisy evaluations $\observation = \objective(\location) + \obsnoise$, where $\obsnoise$ is often assumed to be zero-mean Gaussian noise, though more general noise models can be handled by the framework we propose.
We assume that the objective $\objective$ can be evaluated in parallel, and the algorithm is allowed to run up to $\niter \geq 1$ optimization rounds with a batch of $\nbatch$ query locations $\batch_\iteridx := \{\location_{\iteridx,i}\}_{i=1}^\nbatch \subset \domain$ per round.


\paragraph{BO with regression models.} Typically BO assumes a Bayesian prior over $\objective$ \citep{Garnett2023book}, often given by a Gaussian process \citep{Rasmussen2006}. Given a set of observations $\dataset_\iteridx$, corrupted by Gaussian noise $\obsnoise\sim\normal(0,\sigma_\obsnoise^2)$, the Bayesian posterior distribution over $\objective$ given the data $\dataset_\iteridx$ is available in closed form as a GP with known mean and covariance functions (see \autoref{sec:gp}).
BO then uses the model's posterior distribution to compute an acquisition function $\af_\iteridx(\location)$ mapping candidate points $\location\in\domain$ to their expected utility value $\expectation[\utility(\observation)|\location, \dataset_\iteridx]$, where the utility function $\utility$ intuitively encodes how useful it is to collect a new observation at $\location$. Classical examples of expected utilities include the probability of improvement $\af_\iteridx(\location) := p(\observation\geq\thresh | \location, \dataset_\iteridx) = \expectation[\indicator[\observation\geq \thresh]|\location, \dataset_\iteridx]$ and the expected improvement $\af_\iteridx(\location) := \expectation[\max\{\observation-\thresh, 0\}|\dataset_\iteridx]$. The next candidate is then chosen as:
\begin{equation}
    \location_{\iteridx+1} \in \argmax_{\location\in\domain} \af_\iteridx(\location)\,.
\end{equation}

\paragraph{Batch BO.} This strategy can be extended to the batch setting in a variety of ways \citep[\S 11.3]{Garnett2023book}. For instance, one can select the first batch point $\location_{\iteridx,1}$ by maximizing $\af_\iteridx$ as above, and then select the next candidate as $\location_{\iteridx,2} \in \argmax_{\location\in\domain} \expectation[\utility(\observation) | \location, \dataset_\iteridx \cup \{\location_{\iteridx,1}, \tilde\observation_{\iteridx, 1}\}]$, where the expectation is over both $\tilde\observation_{\iteridx,1} \sim p(\observation|\location_{\iteridx,1}, \dataset_\iteridx)$ and $\observation \sim p(\observation|\location, \dataset_\iteridx \cup \{\location_{\iteridx,1}, \tilde\observation_{\iteridx, 1}\})$, and iterate over this process until $\nbatch$ candidates have been selected for parallel evaluation. Although near optimal, evaluating this conditional expectation becomes quickly intractable as the batch size grows. Hence, one usually resorts to Monte Carlo approximations \citep{Wilson2018}. Other BO strategies allow for efficient optimization of the batch in parallel, such as information-theoretic acquisition functions \citep{Takeno2020,Teufel2024}, or even asynchronously \citep{Kandasamy2018}. However, scaling up to large batches in high-dimensional domains, especially involving combinatorial or mixed discrete-continuous search spaces, remains challenging \citep{Gonzalez-Duque2024}.

\paragraph{Active generation with classification models.}
Instead of relying on a Bayesian surrogate model for $\objective$ and then computing an acquisition function $\af$ on top of it, one can model $\af$ directly, which is the main idea behind likelihood-free BO \citep{Song2022lfbo}.
On this line, methods like variational search distributions (VSD, \citealp{Steinberg2025variational}) and batch BORE \citep{Oliveira2022batch}
learn a probabilistic classifier $\classifier(\location) \approx p(\observation\geq \thresh)$ in the original space, $\domain$, based on improvement labels $\clabel := \indicator[\observation\geq\thresh]$ and then generate batches by approximately sampling $\nbatch$ candidates from $p(\location|\observation\geq\thresh)$.
The classifier can be learned by, e.g., minimizing the cross-entropy loss:
\begin{equation}
    \Loss_\nobs(\classifier) := -\sum_{i=1}^{\nobs} \clabel_i \log \classifier(\location_i) + (1 - \clabel_i) \log (1 - \classifier(\location_i))\,.
    \label{eq:ce-loss}
\end{equation}
Given a prior $\prior$ over $\domain$ and the classifier $\classifier_\iteridx$ that minimizes $\Loss_{\nobs_\iteridx}$ over the current $\nobs_\iteridx := \iteridx\nbatch$ data points in $\dataset_\iteridx$, we can now learn a generative model approximating $p(\location|\observation\geq \thresh, \dataset_{\iteridx})$ as:
\begin{equation}
    \proposal_\iteridx \in \argmax_{\proposal} \expectation_{\location\sim\proposal}[\log \classifier_\iteridx(\location)] - \kl{\proposal}{\prior}\,,
\end{equation}
which corresponds to an evidence lower bound treating $\classifier_\iteridx(\location) \approx p(\observation \geq \thresh|\location, \dataset_\iteridx)$ as a likelihood.

\paragraph{Direct preference optimization.} The process above for learning $\proposal_\iteridx$ can be likened to the typical fine-tuning of large language models (LLMs) via reinforcement learning with human feedback (RLHF, \citealp{Bai2022rlhf}), which would normally involve training the LLM as an RL agent with a reward model $\reward$. In practice, we do not directly observe rewards, but have access to user preferences. Given a prompt's context $\context$, corresponding to the RL state, let $\location^+, \location^- \sim \proposal(\location|\context)$ denote two answers generated by an LLM $\proposal$, with $\location^+$ denoting the answer preferred by the user, and $\location^-$ the dispreferred one. Having a dataset $\dataset_\nobs^+ := \{\context_i, \location_i^+, \location_i^-\}_{i=1}^\nobs$, one can then learn a reward function $\reward$ by minimizing the negative log-likelihood under a preference model, such as \citet{Bradley1952rank}:
\begin{equation}
    \Loss_\nobs^+(\reward) := - \expectation_{(\context, \location^+, \location^-) \sim \dataset_\nobs^+} [\log\sigma(\reward(\context, \location^+) - \reward(\context, \location^-))]\,.
    \label{eq:rlhf-reward}
\end{equation}
Having learned a reward model $\reward_\nobs$, RLHF trains the LLM as to approximate an agent's optimal policy under $\reward_\nobs$. Regularization based on the Kullback-Leibler (KL) divergence with respect to a reference model $\refmodel$ is further added to improve stability. The optimal generative model then solves:
\begin{equation}
    \proposal_\nobs \in \argmax_{\proposal} \expectation_{\context\sim\dataset_\nobs^+, \location\sim\proposal(\location|\context)}[\reward_\nobs(\context,\location)] - \temperature\kl{\proposal}{\refmodel}\,.
    \label{eq:rlhf-policy}
\end{equation}
Direct preference optimization (DPO, \citealp{Rafailov2023dpo}) removes the need for an explicit reward model by viewing the LLM itself through the lens of a reward model. It is not hard to show that, fixing a reward model $\reward$, the optimal solution to \autoref{eq:rlhf-policy} is given by:
\begin{equation}
    \proposal(\location|\context) = \frac{1}{\normfactor_\reward(\context)} \refmodel(\location|\context)\exp\left(\frac{1}{\temperature}\reward(\context,\location)\right),
\end{equation}
where $\normfactor_\reward(\context) := \sum_\location \refmodel(\location|\context) \exp(\temperature^{-1}\reward(\context,\location))$ is the partition function at the given context $\context$. Although it is intractable to evaluate $\normfactor_\reward$ in practice, DPO uses the fact that, in the Bradley-Terry model, the partition function-dependent terms cancel out. Note that the reward model $\reward$ can be expressed in terms of the optimal $\proposal$ as:
\begin{equation}
    \reward(\context,\location) = \temperature\log\left(\frac{\proposal(\location|\context)}{\refmodel(\location|\context)}\right) + \temperature\log\normfactor_\reward(\context)\,.
\end{equation}
Applying the substitution above to the preference-based loss \eqref{eq:rlhf-reward}, we get:
\begin{equation}
    \Loss_{\mathrm{DPO}}(\proposal) = - \expectation_{(\context, \location^+, \location^-) \sim \dataset_\nobs^+} \left[
        \log\sigma\left(
            \temperature\log\left(\frac{\proposal(\location^+|\context)}{\refmodel(\location^+|\context)}\right)
            -
            \temperature\log\left(\frac{\proposal(\location^-|\context)}{\refmodel(\location^-|\context)}\right)
        \right)
    \right]\,,
\end{equation}
which eliminates the partition function $\normfactor_\reward$ terms.
Therefore, we can train the generative model $\proposal$ directly with $\Loss_{\mathrm{DPO}}$ without the need for an intermediate reward model. Such simplification to a single training loop cuts down the need for computational resources, eliminates a source of approximation errors (from learning $\reward$), and brings in theoretical guarantees from Bradley-Terry models \citep{Shah2016, Bong2022}. The main question guiding our work is whether we can apply a similar technique to simplify the training of (arbitrary, not necessarily LLM) generative models for likelihood-free BO by removing the need for an intermediate surrogate model for $\objective$.

\section{A general recipe for generative Bayesian optimization}
As seen in \Cref{sec:background}, using generative models for BO typically involves training a regression or classification model as an intermediate step to then train the candidate generator. The use of an intermediate model demands additional computational resources and brings in further sources of approximation errors which may hinder performance. 
Hence, we propose a framework to train the generative model directly from (noisy) observation values. The main idea is to train the model to approximate a target distribution proportional to BO's acquisition function and then use the learned generative model as a proposal for the next query locations. There are different approaches to do so, some of which have been previously explored in the literature for specific acquisition functions, such as the probability of improvement \citep{Brookes2019cbas, Steinberg2025variational} and upper confidence bound \citep{Yun2025dibo}. However, we here focus on a general recipe to turn a generative model into a density following \emph{any} acquisition function that can be expressed as an expected utility.

\paragraph{Utility functions.} Consider a likelihood-free BO setting \citep{Song2022lfbo}, where we aim to directly learn an acquisition function $\af_\iteridx:\domain\to\R$ from available data $\dataset_\iteridx$. If our acquisition function takes the form of an expected utility:
\begin{equation}
    \af_\iteridx(\location) = \expectation[\utility_\iteridx(\observation) \mid \location, \dataset_\iteridx],
\end{equation}
we can estimate $\af_\iteridx$ from noisy utility data $\{\location_i, \utility_{\iteridx,i}\}_{i=1}^{\nobs_\iteridx}$, where $\utility_{\iteridx,i} = \utility_\iteridx(\observation_i)$. As examples of utility functions, we have:
\begin{enumerate}
    \item Probability of improvement (PI): $\utility_\iteridx(\observation) = \indicator[\observation \geq \thresh_\iteridx]$;
    \item Expected improvement (EI): $\utility_\iteridx(\observation) = \max(\observation - \thresh_\iteridx, 0)$;
    \item Soft expected improvement (sEI): $\utility_\iteridx(\observation) = \operatorname{softplus}(\observation-\thresh_\iteridx)$;
    \item Mean: $\utility_\iteridx(\observation) = \observation$;
\end{enumerate}
given a threshold $\thresh_\iteridx$ for improvement-based utilities, e.g., the largest observation value or a quantile of the empirical observations distribution \citep{Tiao2021}. A comprehensive summary of typical utility functions for BO can be found in \citet{Wilson2018}. The ones listed above, however, can be directly expressed as a function of the observations. The soft-plus version of EI (sEI) is added as a smoother utility variant of EI which remains positive when $\observation \leq \thresh$ \citep[cf.][]{Ament2023}.

\paragraph{BO with generative models.} As an illustrative example, consider the case of PI where $\af(\location) = \expectation[\indicator[\observation \geq \thresh]\mid \location] = p(\observation \geq \thresh|\location)$, which has been previously applied to train generative models for black-box optimization using reward surrogate models \citep{Steinberg2025variational}. Given a sampler for the conditional distribution $p(\location | \observation \geq \thresh)$, by Bayes rule, we recover the original PI as:
\begin{equation}
    \af(\location) = p(\observation \geq \thresh|\location) = \frac{p(\location | \observation \geq \thresh)p(\observation \geq \thresh)}{\prior(\location)} \propto \frac{p(\location | \observation \geq \thresh)}{\prior(\location)}\,.
\end{equation}
The prior $\prior$ is usually known, and it can be set as uninformative $\prior(\location) \propto 1$ or set to encode prior information about the optima. We then see that learning a generative model to approximate the posterior above is equivalent to learning a probabilistic classifier for the improvement event $\observation \geq \thresh$, as in \citet{Song2022lfbo}. Moreover, if we only have a probabilistic classifier approximating $p(\observation \geq \thresh|\location)$, we still need to select candidate points via optimization over the classification probabilities landscape, which can be highly non-convex presenting several local optima, as in the usual BO setting we choose $\location_{\iteridx+1}$ as the (global) maximizer of the acquisition function $\af$. In contrast, a generative model provides us with a direct way to sample candidates $\location \sim p(\location | \observation \geq \thresh)$ which will naturally concentrate in the regions of highest probability density, and therefore highest utility, according to the model. Finally, note that this same reasoning can be extended to any other non-negative expected utility function by training the generative model to approximate:
\begin{equation}
    p_\iteridx^*(\location) \propto \prior(\location) \af_\iteridx(\location)\,,
\end{equation}
or similarly $p_\iteridx^*(\location) \propto \prior(\location) \exp \af_\iteridx(\location)$, which allows for utilities that might take negative values. Thus, GenBO admits a generalized Bayesian interpretation as direct inference over the optimum location $\location^*$ rather than $\objective$, with $\prior$ serving as a prior over $\location^*$ and the acquisition value $\af_\iteridx(\location)$ as a utility-based pseudo-likelihood factor \citep[cf.][]{Knoblauch2022}.

\paragraph{Overview.} Let $\qfamily \subset \Pspace(\domain)$ be a learnable family of probability distributions over a domain $\domain$. We consider general loss functions of the form:
\begin{equation}
    \Loss_\iteridx(\proposal) := \regfactor_\iteridx\regfun_\iteridx(\proposal) + \sum_{i=1}^{\nobs_\iteridx} \loss_i(\proposal)\,,
\end{equation}
where $\loss_i$ are individual losses over points $\location_i \in \domain$ or pairs of points $\location_{i,1}, \location_{i,2}\in \domain$ and their corresponding utility values, $\regfactor_\iteridx \geq 0$ is an optional regularization factor, and $\regfun_\iteridx: \qfamily\to[0,\infty)$ is a complexity penalty function. The algorithm starts by learning a proposal from available data $\dataset_{\iteridx-1}$:
\begin{equation}
    \proposal_{\iteridx-1} \in \argmin_{\proposal \in \qfamily} \Loss_{\iteridx-1}(\proposal)\,.
\end{equation}
A batch $\batch_{\iteridx} := \{\location_{\iteridx,i}\}_{i=1}^\nbatch$ is sampled from the learned proposal $\proposal_{\iteridx-1}$. We evaluate the utilities $\utility_{\iteridx}(\observation_{\iteridx,i})$ with the collected observations $\observation_{\iteridx,i} \sim p(\observation|\location_{\iteridx,i})$, for $i\in \{1,\dots, \nbatch\}$, and repeat the cycle up to a given number of iterations $\niter \in \N$. This process is summarized in \autoref{alg:genbo}. In the following, we describe strategies to formulate general losses $\loss_i$ to learn acquisition functions and how to ensure that the sequence of batches $\{\batch_\iteridx\}_{\iteridx=1}^\infty$ asymptotically concentrates at the global optima.

\begin{algorithm}[t]
    \caption{Generative BO}
    \label{alg:genbo}
    \DontPrintSemicolon
    \KwIn{Domain $\domain$, initial data \(\dataset_0\)}
    \For{$\iteridx \in \{1,\dots, \niter\}$}{
        $\proposal_{\iteridx-1} \in \argmin_{\proposal\in\qfamily} \Loss_{\iteridx-1}(\proposal)$   \tcp*{Fit proposal distribution}
        $\batch_\iteridx = \{\location_{\iteridx,i}\}_{i=1}^\nbatch \sim \proposal_{\iteridx-1}$ \tcp*{Sample batch}
        $\observation_{\iteridx, i} \leftarrow \objective(\location_{\iteridx,i}) + \obsnoise_{\iteridx,i}$, for $i \in \{1, \dots, \nbatch\}$ \tcp*{Collect observations}
        $\dataset_\iteridx = \dataset_{\iteridx-1} \cup \{\location_{\iteridx,i}, \observation_{\iteridx,i}\}_{i=1}^\nbatch$  \tcp*{Update data}
    }
\end{algorithm}

\subsection{Preference-based learning}
We first aim to apply a similar reparameterization trick to the one in DPO to simplify generative BO methods. Note that, for a general classification loss, such as the one in \autoref{eq:ce-loss}, it is not possible to eliminate the partition function resulting from a DPO-like reparameterization without resorting to approximations, which might change the learned model. Hence, we need a pairwise-contrastive training objective.

\paragraph{Preference loss.} To apply a preference-based loss, we can train a model to predict preferential directions of the acquisition function. Assume we have a dataset $\udataset_\nobs := \{\location_i, \utility_i\}_{i=1}^\nobs$ with $\nobs$ evaluations of a given utility function $\utility: \R \to \R$.  We may reorganize the data into pairs of inputs and corresponding utility values $\{\location_{i,1}, \location_{i,2}, \utility_{i,1}, \utility_{i,2}\}_{i=1}^{\nobs/2}$, where $\utility_{i,j} := \utility(\observation_{i,j})$, for $j\in\{1,2\}$, and train a generative model $\proposal$ using the Bradley-Terry preference loss from DPO with, for $i\in\{1,\dots, \nobs/2\}$:
\begin{equation}
    \loss^\PL_i(\proposal) := \loss^\PL_i(\proposal, \Delta \utility_i) := -\log \sigma \left( 
        \temperature
        \sign(\Delta\utility_i) 
        \left(
            \log\left(
                \frac{\proposal(\location_{i,1})}{\prior(\location_{i,1})}
            \right)
            -
            \log\left(
                \frac{\proposal(\location_{i,2})}{\prior(\location_{i,2})}
            \right)
        \right)
    \right),
    \label{eq:loss-pl-summand}
\end{equation}
where $\Delta\utility_i := \utility_{i,1} - \utility_{i,2}$, as in the DPO formulation, $\temperature > 0$ is a (optional) temperature parameter and the prior $\prior$ can be given by a reference model, either pre-trained or derived from expert knowledge about feasible solutions to the optimization problem \eqref{eq:problem}. Similar to \citet{Rafailov2023dpo}, the learned generative model is seeking to approximate:
\begin{equation}
    p_\utility^*(\location) := \frac{1}{\normfactor_\utility} \prior(\location)\exp\left(\frac{1}{\temperature}\expectation[\utility(\observation)|\location]\right),
\end{equation}
where $\normfactor_\utility$ is the normalization factor.

\paragraph{Robust preference loss.} As shown in \citet{Chowdhury2024rdpo}, the original DPO loss is not robust to preference noise. As in BO, one usually only observes noisy evaluations of the objective function, utility values directly derived from the observation values will also be noisy and correspondingly the sign of their differences as well. Namely, assume there is a small $\pflip \in (0, 1/2)$ probability of the preference directions being flipped w.r.t. the sign of the true expected utility:
\begin{equation}
    \prob{\sign(\utility_{i,1} - \utility_{i,2}) = \sign(\expectation[\utility_{i,2}|\location_{i,2}] - \expectation[\utility_{i,1}|\location_{i,1}])} = \pflip\,.
\end{equation}
\citet{Chowdhury2024rdpo} showed that the original DPO preference loss is biased in this noisy case, and proposed a robust version of the DPO loss to address this issue as:
\begin{equation}
    \loss_i^\rPL(\proposal) := \frac{(1-\pflip)\loss_\PL(\proposal, \Delta\utility_i) - \pflip\loss_i^\PL(\proposal, -\Delta\utility_i)}{1-2\pflip},
    \label{eq:loss-rpl}
\end{equation}
which yields the robust preference loss (rPL). 
It follows that the loss function above is unbiased and robust to observation noise under mild assumptions \citep{Chowdhury2024rdpo}.


\subsection{Divergence-based learning}
A disadvantage of DPO-based losses when applied to BO is that they only take the signs of the pairwise utility differences into account, discarding the remaining information contained in the magnitude of the utilities. A simpler approach is to train the generative model $\proposal$ to match $p_\utility^*$ directly.

\paragraph{Forward KL.} If we formulate the target distribution as $p_\utility^*(\location) \propto \prior(\location) \af(\location)$, the forward Kullback-Leibler (KL) divergence of the proposal w.r.t. the target is given by:
\begin{equation}
    \kl{p_\utility^*}{\proposal} = \expectation_{\location\sim p_\utility^*}[\log p_\utility^*(\location) - \log \proposal(\location)]\,.
\end{equation}
As we do not have samples from $p_\utility^*$, at each iteration $\iteridx$ the algorithm generates samples from the current best approximation $\batch_\iteridx := \{\location_{\iteridx,i}\}_{i=1}^\nbatch \sim \proposal_{\iteridx-1}$. An unbiased training objective can then be formulated as:
\begin{equation}
    \loss_i^{\mathrm{fKL}}(\proposal) = -\frac{\prior(\location_i)}{\proposal_{i-1}(\location_i)}\utility(\observation_i)\log\proposal(\location_i)\,,
    \label{eq:loss-fkl}
\end{equation}
which we write in a condensed form to avoid notation clutter where $\proposal_{i-1}$ denotes the proposal density used to sample $\location_i$ under the re-indexed sequence of observations. The objective above is unbiased and its global optimum can be shown to converge to $p_\utility^*$ by an application of standard results from the adaptive importance sampling literature \citep{Delyon2018ais}. A simpler version of this training objective was derived for CbAS \citep{Brookes2019cbas} using only the last batch for training, which would allow for convergence only as the batch size goes to infinity $\nbatch\to\infty$. Furthermore, as we discuss in our analysis, convergence to $p_\utility^*$ is not sufficient to ensure convergence to the global optima of the objective function $\objective$.

\paragraph{Balanced forward KL.} As utilities like those of PI and EI can evaluate to 0 at the points where $\observation<\thresh$ was observed, with $\thresh$ corresponding to an improvement threshold, every point below the threshold will not be penalized by the loss function. As a result, the model may keep high probability densities in regions of low utility. To prevent this, we may use an alternative formulation of the forward KL which comes from the definition of Bregman divergences with the convex function $\utility\mapsto\utility\log\utility$, yielding a loss:
\begin{equation}
    \loss_i^{\mathrm{bfKL}}(\proposal) = -\frac{\prior(\location_i)}{\proposal_{i-1}(\location_i)}\utility(\observation_i)\log\proposal(\location_i) + \frac{\proposal(\location_i)}{\proposal_{i-1}(\location_i)}\,.
    \label{eq:loss-bfkl}
\end{equation}
We defer the details of the derivation to the appendix. Although the additional $\proposal(\location)$ only contributes to a constant term when integrated over, for finite-sample approximations, it contributes to a soft penalty on points where it is observed that $\utility(\observation) = 0$.

\subsection{Generalizations}
In general, we can extend the above framework to use a proper scoring rule $\score: \Pspace(\domain) \times \domain \to \R$ \citep{Gneiting2007strictly} other than the log loss. We can then optimize $\proposal$ using losses of the form:
\begin{equation}
    \loss_i^\score(\proposal) = - \frac{\prior(\location_i)}{\proposal_{i-1}(\location_i)} \utility(\observation_i)\score(\proposal, \location_i)\,.
\end{equation}
Although we leave the exploration of this formulation for future work, it is essentially compatible with our proposed theoretical framework and readily extensible to generative models that may not have densities available in closed form, such as diffusion and flow matching \citep{Lipman2024}, which still provide flexible probabilistic models.

\section{Theoretical analysis}
In this section, we present a theoretical analysis for the algorithm's approximation of the utility-based target distribution $p_\utility^*$ and its performance in regards to the global optimization problem \eqref{eq:problem}.
We consider parametric generative models $\proposal_\parameters$ with a given parameter space $\paramspace\subset\R^\paramdim$. For the purpose of our analysis, we will assume that models can be described as $\proposal_\parameters(\location) = \exp\model_\parameters(\location)$, which is possible whenever probability densities are strictly positive $\proposal_\parameters(\location) > 0$. To accommodate for both the pairwise preference-based losses and the point-based divergence approximations, we introduce the following notation for the loss function:
\begin{equation}
    \Loss_\nobs(\model_\parameters) = \regfun_\nobs(\model_\parameters) + \sum_{i=1}^\nobs \weight_i\loss(\observer_i(\model_\parameters), \mlabel_i)\,,
\end{equation}
where $\observer_i(\model_\parameters)$ corresponds to the $i$th model evaluation with, e.g., $\observer_i(\model_\parameters) := \model_\parameters(\location_i) = \log \proposal_\parameters(\location_i)$ for KL, and $\observer_i(\model_\parameters) := \model_\parameters(\location_{i,1}) - \model_\parameters(\location_{i,2})$ for preference-based losses, $\mlabel_i$ encodes the dependence on utility values with $\mlabel_i := \utility(\observation_i)$ for KL and $\mlabel_i := \sign(\utility_{i,1} - \utility_{i,2})$ for DPO losses, and $\weight_i$ are potential importance weights.

\paragraph{Regularity assumptions.} We make a few mild regularity assumptions about the problem setting and the model. Firstly, for the analysis, we assume that the models $\model_\parameters$ lie in a reproducing kernel Hilbert space (RKHS) $\Hspace_\kernel$ shared with the true log density $\model_*$, which is such that $p_\utility^*(\location) = \exp \model_*(\location)$. This assumption does not impose that the proposals are kernel models, but simply that they lie in Hilbert space of functions with well defined point evaluations. The domain $\domain$ is assumed to be a compact metric space, with main results specialized for the finite discrete setting, i.e., $\card{\domain} < \infty$. 
The individual losses $\loss: \R \times \R \to \R$ are strictly convex and twice differentiable w.r.t. their first argument. 
We also assume that the regularizer $\regfun_\nobs$ is strongly convex and twice differentiable. The rest of our assumptions and proofs are presented and discussed in more detail in \autoref{sec:analysis}.

\begin{restatable}{lemma}{lossbounds}
    \label{thr:loss-bounds}
	Let assumptions \ref{a:regularization} to \ref{a:weights} be satisfied. Then,
	\begin{equation*}
		\frac{1}{2} \norm{\model - \model_\nobs}_{\Hessian_\nobs}^2 \leq\Loss_\nobs(\model) - \Loss_\nobs(\model_\nobs) \leq \frac{1}{2} \norm{\nabla\Loss_\nobs(\model)}_{\Hessian_\nobs^{-1}}^2\,,
	\end{equation*}
	where $\Hessian_\nobs:\Hspace_\kernel\to\Hspace_\kernel$ is an operator-valued lower bound on the Hessian of the loss $\Loss_\nobs$:
	\begin{equation*}
		\forall\model\in\Hspace_\kernel, \quad \nabla^2 \Loss_\nobs(\model) \succeq \Hessian_\nobs := \regfactor_\nobs\idop + \lossfactor\bound_\weight\sum_{i=1}^\nobs \observer_i \otimes \observer_i\,.
	\end{equation*}
\end{restatable}
\begin{remark}
    The result in \autoref{thr:loss-bounds} automatically ensures that the loss functional $\Loss_\nobs$ is strongly convex over the function space, as $\nabla^2 \Loss_\nobs(\model) \succeq \Hessian_\nobs \succeq \regfactor_\nobs\idop \succ \operator{0}$, for all $\model\in\Hspace_\kernel$, and therefore has a unique minimizer at $\model_\nobs$, defined in \autoref{sec:assumptions}. Note, however, that the same cannot be implied about $\Loss_\nobs(\model_\parameters)$ over $\paramspace$ based on this result alone, since the mapping $\parameters\mapsto\model(\cdot, \parameters)$ may not be linear.
\end{remark}


\begin{restatable}{theorem}{mainthm}
    \label{thr:error-bound}
    Let assumptions \ref{a:rkhs} to \ref{a:dense} hold. Then, given any $\delta\in(0,1)$,
    \begin{equation*}
        \forall\nobs\in\N, \quad \lvert \inner{\observer, \model_*}_\kernel - \inner{\observer, \model_{\parameters_\nobs}}_\kernel \rvert 
        \leq 
        \beta_\nobs(\delta) (\norm{\observer}_{\Hessian_\nobs^{-1}} + 2|\inner{\observer,\unitf}_\kernel| v_\nobs(\bar\proposal_\nobs) ),
        \quad \forall \observer\in\Hspace_\kernel,
    \end{equation*}
    which holds with probability at least $1-\delta$, where $\beta_\nobs(\delta)$ is given by \autoref{thr:error-unconstrained}, 
    where $\unitf(\location) = 1$ denotes the unit constant function, $v_\nobs(\proposal) := \Ex[\location\sim\proposal]{\norm{\feature(\location)}_{\Hessian_\nobs^{-1}}}$, and $\bar\proposal_\nobs := \frac{p_\utility^* + \proposal_\nobs}{2}$.
\end{restatable}

The result above shows that the approximation error for the optimal parameter $\parameters_\nobs$ concentrates similarly to that of a kernel method, even though we do not require the model to be a kernel machine. In addition, the term $\norm{\observer}_{\Hessian_\nobs^{-1}}$ is associated with the predictive variance of a Gaussian process model, which can be shown to converge to zero if, e.g., $\inf_{\location\in\domain}\proposal_\parameters(\location) \geq \bound_\proposal > 0$, for all $\parameters \in \paramspace$ (see \autoref{thr:gp-variance-convergence} in the appendix). Alternatively, one can follow a KL-focused analysis as in \citet{Oliveira2021} to bound $\kl{\proposal_{\parameters_\nobs}}{\proposal_*}$ via the information gain, which bounds the growth of predictive uncertainty terms $\norm{\observer_\nobs}_{\Hessian_{\nobs-1}^{-1}}^2$ appearing in \autoref{thr:error-bound}. The asymptotic rate of $\beta_\nobs$ is discussed in \autoref{thr:vanishing-error}.

\paragraph{Optimality.} \autoref{thr:error-bound} allows us to establish that the model converges to the target $\model_*$ associated with the target distribution $p_\utility^*$ for a given utility function $\utility$, provided $\norm{\observer}_{\Hessian_\nobs^{-1}}$ vanishes sufficiently fast. However, convergence to the target distribution alone does not ensure optimality of the samples $\location \sim \proposal_\iteridx$. The latter is possible by applying results from reward-weighted regression, which shows that training a proposal to maximize $\expectation_{\observation\sim p(\observation|\location), \location\sim \proposal_{\iteridx-1}}[\utility(\observation)\log \proposal(\location)]$ yields a sequence of increasing expected rewards $\expectation[\utility(\observation_\iteridx)] \leq \expectation[\utility(\observation_{\iteridx+1})] \leq \dots$ \citep[Thm. 4.1]{Strupl2022rwr}. If the maximizer of the sequence of expected utilities converges to the maximizer of the objective function $\objective$, then the generative BO proposals will concentrate at $\objective$'s global optima. Therefore, for KL-based loss functions, one may drop the proposal densities in the importance sampling weights $1/\proposal_{i-1}(\location_i)$ to promote this posterior concentration phenomenon, as corroborated by our experimental findings, which generally did not include importance weights. This same concentration of the learned target distribution should also occur with the preference-based loss functions due to the absence of importance-sampling weights. If these targets concentrate on the optimizer set, simple regret may vanish, whereas sublinear cumulative regret requires rate control that we leave for future work.

\section{Related work}

Using generative models for online-optimization is becoming an increasingly popular strategy for optimization in discrete, mixed discrete-continuous or high-dimensional design spaces where classical
BO is limited. The following discusses other works applying generative models to BO settings and contrasts them with the reward-model-free active generation framework we propose.

\paragraph{Latent-space BO.} In latent-space BO (LSBO) methods for high-dimensional problems \citep{Gomez-Bombarelli2018, stanton2022accelerating, Gruver2023}, one learns a probabilistic representation of a (usually lower-dimensional) manifold of the data jointly with $\objective$, and performs BO in that space, projecting query points back to the original space at evaluation time. This technique has led to numerous BO methods for high-dimensional and discrete-space optimization \citep{Gomez-Bombarelli2018, Gruver2023, Gonzalez-Duque2024}. Learning this latent space can, however, cause complications. LSBO can suffer poor sample efficiency if the latent space is learned from the initial training set and then fixed \citep{tripp2020sample}. Or poor performance can arise from reconstruction errors between the latent and observation space \citep{lee2025latent}. GenBO and other methods like VSD do not suffer from these issues as all inference is done in the observation space. Lastly, despite recent advances in the field \citep{Chu2024inversionbased, lee2025latent, Moss2025cowboys}, to our knowledge, LaMBO-2 remains state-of-the-art in LSBO for \emph{long} sequences, like proteins.

\paragraph{Diffusion for BBO.} There has been recent progress in adapting denoising diffusion models to black-box optimization (BBO) tasks, often by learning a model that can be conditioned on observation values, given a dataset of evaluations \citep{Krishnamoorthy23a}. Other approaches involve guiding the diffusion process by a given utility function derived from a regression model \citep{Gruver2023, Yun2025dibo}. Note, however, that such methodologies are specific to diffusion, whereas we focus on a general approach that can be applied to arbitrary generative models.

\paragraph{LLMs and BO.} Recent work has begun to integrate large language models (LLMs) into BO pipelines, primarily to inject prior knowledge, improve cold-start performance, or offload certain design decisions to a learned policy. Several studies use LLMs as contextual priors over the design space: for example, guiding initialization or proposal generation by leveraging natural-language domain knowledge \citep{liu2024large}, or selecting acquisition functions adaptively via an LLM-driven controller \citep{aglietti2025funbo}. Other work treats BO as a test-time search tool that an LLM can call to refine or validate its own proposals during inference \citep{agarwal2025searching}. Most relevant to our setting is a recent reward-model-free approach for protein engineering \citep{chen2025generalists}, which uses LLM preference modeling, akin to DPO, to steer search without an explicit surrogate. This shares the reward-model-free philosophy of GenBO, but differs fundamentally in relying on a general-purpose LLM, whereas GenBO provides a framework for task-specific generative black-box optimization problems with no language interface or pretrained reward structure.

\section{Experiments}
\label{sec:experiments}
We evaluate several variants of generative BO (GenBO) on a number of challenging sequence optimization tasks against popular and strong baselines, including CbAS \citep{Brookes2019cbas}, VSD \citep{Steinberg2025variational}, and LaMBO-2 \citep{Gruver2023}, besides trivial baselines, random mutations and a genetic algorithm (GA) implemented in \textsc{poli} \citep{Gonzalez-Duque2024}. As performance measures, we assess the simple regret, $\regret_\iteridx := \objective(\location^*) - \max_{i\leq\nobs_\iteridx} \objective(\location_i)$, and the cumulative maximum, $\max_{i\leq \nobs_\iteridx} \objective(\location_i)$, where $\nobs_\iteridx := \card{\dataset_\iteridx}$ is the number of function evaluations up to round $\iteridx$. In legend boxes, algorithms are sorted in descending order of final average regret. Shaded areas correspond to $\pm 1$ standard deviation across five different random seeds. \autoref{app:experiment-settings} presents further details about experiment settings and ablation studies. \autoref{tab:results} and \ref{tab:results-foldx} summarize final results.

\subsection{Text optimization} 

As a first experiment, we wish to optimize a short sequence (5 letters) to minimize the edit distance to the sequence \texttt{ALOHA}, which is implemented as a \textsc{poli} black-box \citep{Gonzalez-Duque2024poli}. Here $\domain = \vocab^M$ where $\vocab$ is the English alphabet, and $M$ is sequence length. Even though this sequence is relatively short, still $|\domain| = |\vocab|^M > 11.8$ million elements. We increase the difficulty by only allowing $|\dataset_0| = 64$ where the minimum edit distance is 4, $\nbatch = 8$, and $T=10$. We compare GenBO to the classifier guided VSD \citep{Steinberg2025variational} and CbAS \citep{Brookes2019cbas}, and to a simple greedy baseline that applies (3) random mutations to its best candidates per-round \citep{Gonzalez-Duque2024poli}. For GenBO, VSD and CbAS we use a simple mean-field (independent) categorical proposal distribution, $\proposal$, and a uniform prior, $\prior$. VSD and CbAS use a simple embedding and 1-hidden layer MLP classifier for estimating PI. We also varied the threshold $\thresh$ annealing schedule. Architectural details and other experimental specifics are given in \autoref{asub:textgen}.

Results are summarized in \autoref{fig:textopt}. We can see that the random baseline is not able to make much headway and CbAS under-performs due to its limited use of data (last batch only) in retraining. 
For this experiment, GenBO with the robust preference loss (rPL) and EI-based utilities showed the quickest improvements, whereas PI is able to reach the exact optimum at the end, with VSD eventually also achieving good performance. In \autoref{fig:annealing} (appendix), we present an ablation study on the threshold $\thresh_\iteridx$ annealing scheme we used to balance the exploration-exploitation trade-off for GenBO and PI-based baselines (VSD and CbAS). The plots reveal that this problem generally favors a more exploitative approach by concentrating on higher quantiles of the observations marginal distribution. GenBO was, however, relatively less sensitive to the choice of annealing scheme, as long as the final percentile was set anywhere above 90\%, whereas VSD required a generally sharper rise to above the 95\% quantile towards the end of the optimization process, favoring original settings suggested by \citet{Steinberg2025variational}. We also find that in this problem the use of a pre-trained informative prior $\prior$ may not bring significant performance advantage, as GenBO variants with no prior (i.e., $\prior\propto 1$) performed best. Lastly, we also highlight significant improvements in run time for GenBO, making it on average 3 times faster than VSD (see \autoref{tab:runtime} in the appendix) for not needing to fit an intermediate surrogate model.

\begin{figure}[htb]
    \centering
    \subcaptionbox{\texttt{ALOHA}\label{fig:textopt}}{
        \includegraphics[width=0.31\linewidth]{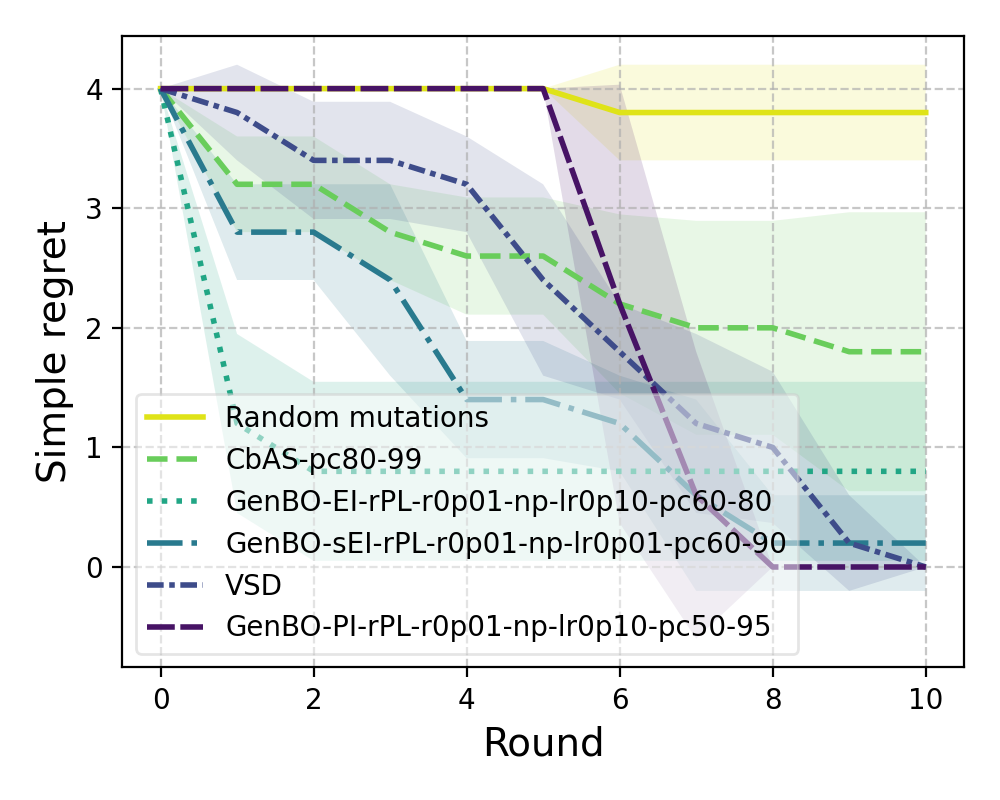}
    }
    \subcaptionbox{Stability\label{fig:stability}}{
        \includegraphics[width=0.31\linewidth]{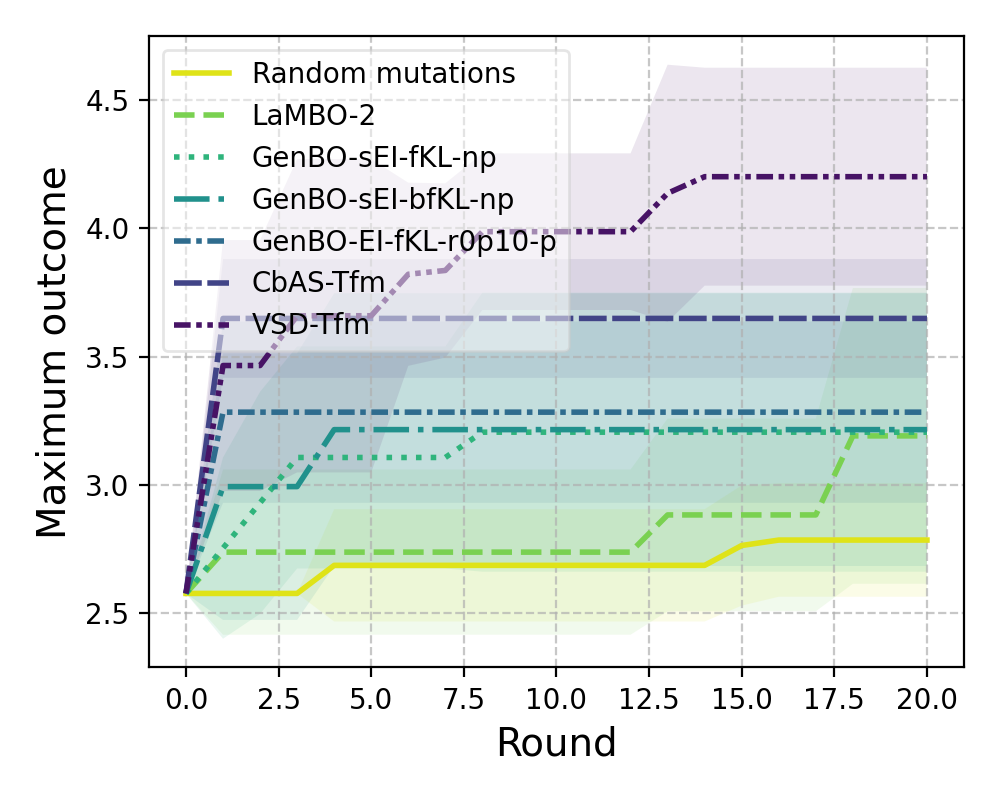}
    }
    \subcaptionbox{SASA\label{fig:sasa}}{
        \includegraphics[width=0.31\linewidth]{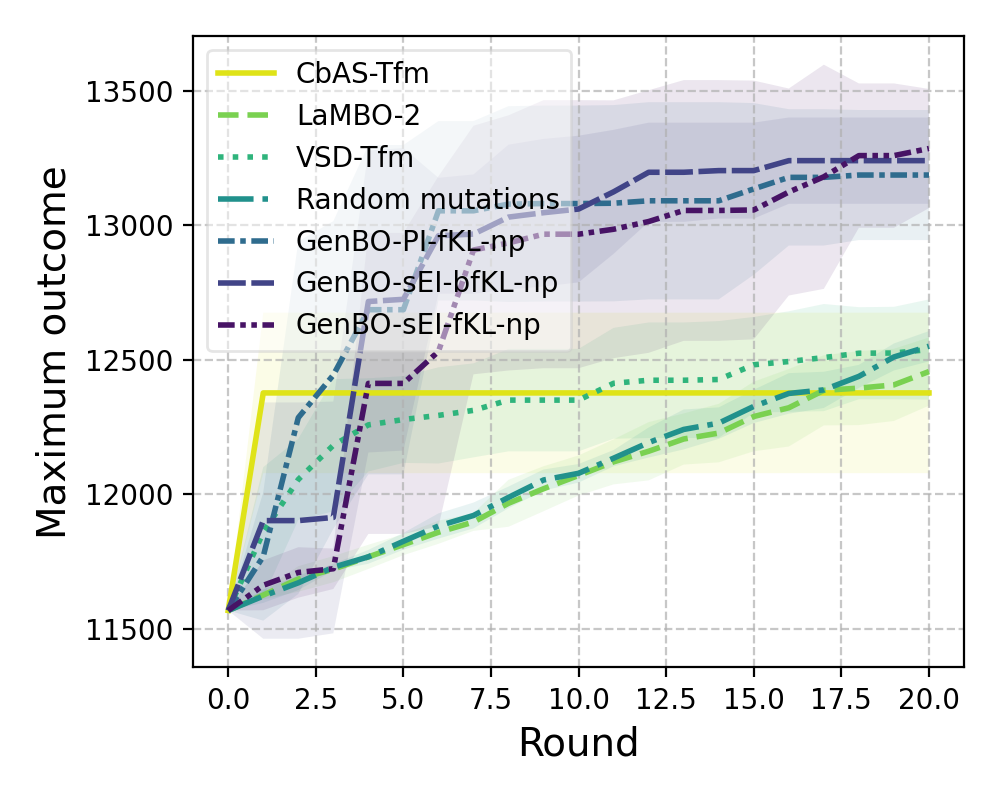}
    }
    \caption{Performance of baseline black box optimizers and GenBO variants on the (a) 
    \texttt{ALOHA}, (b) stability, and (c) solvent accessible surface area optimization problems.}
    \label{fig:textoptfoldx}
\end{figure}

\begin{figure}[htb]
    \centering
    \subcaptionbox{$M=15$\label{fig:em15}}{
        \includegraphics[width=0.315\linewidth]{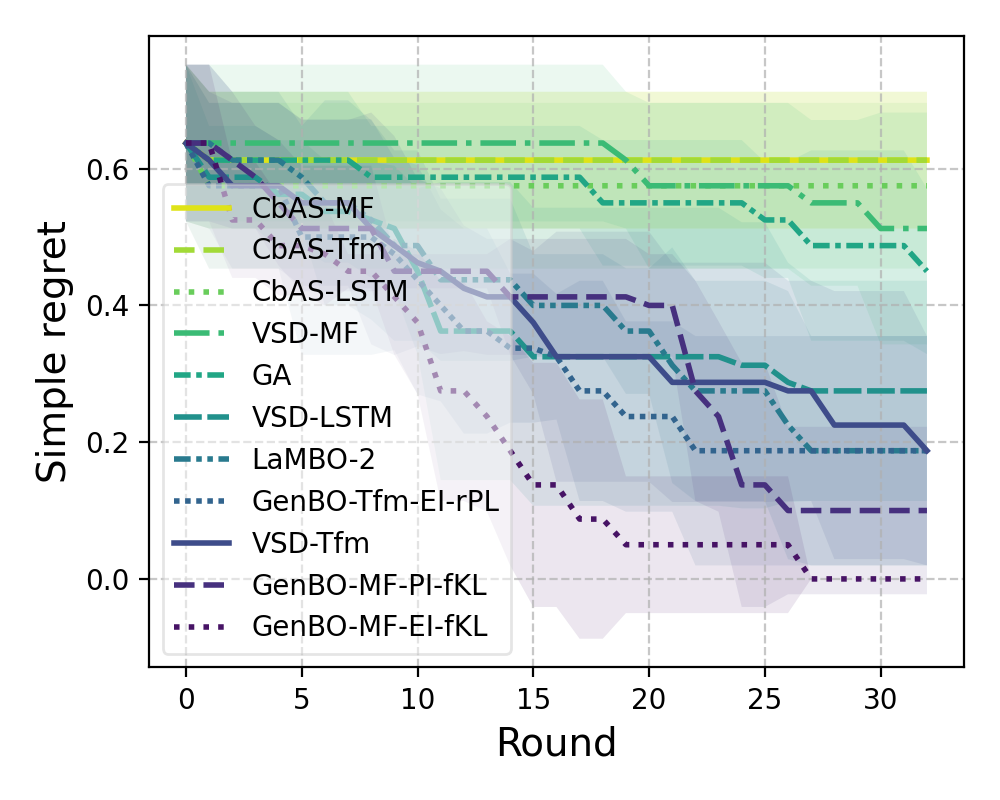}
    }
    \subcaptionbox{$M=32$\label{fig:em32}}{
        \includegraphics[width=0.315\linewidth]{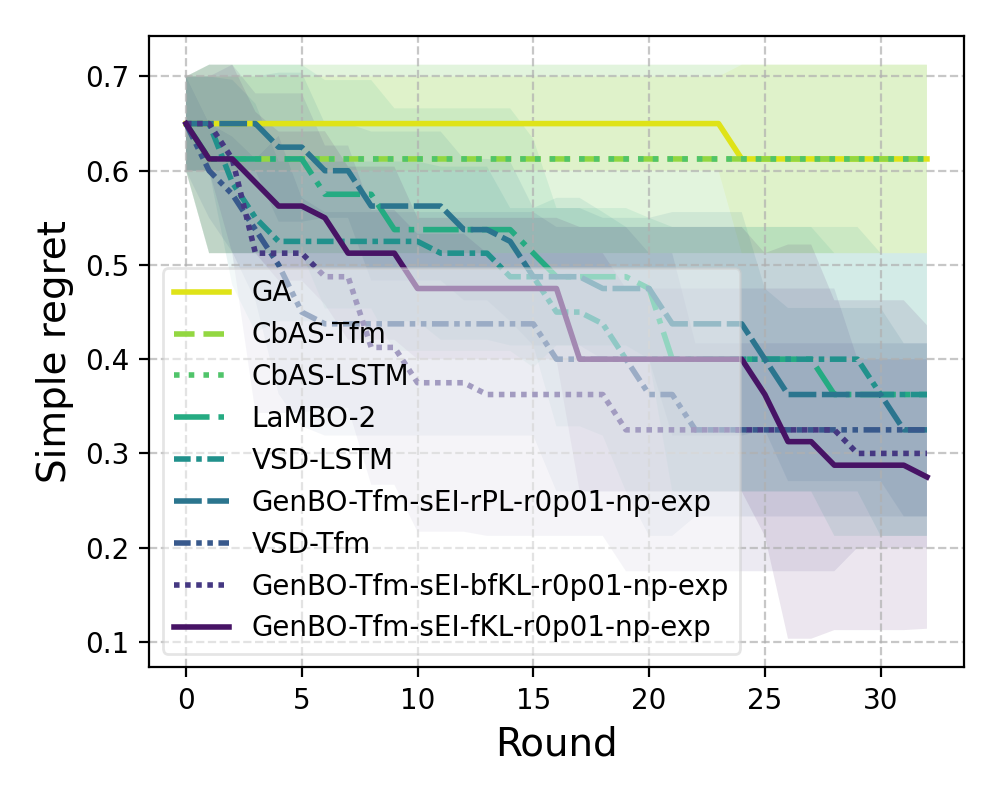}
    }
    \subcaptionbox{$M=64$\label{fig:em64}}{
        \includegraphics[width=0.315\linewidth]{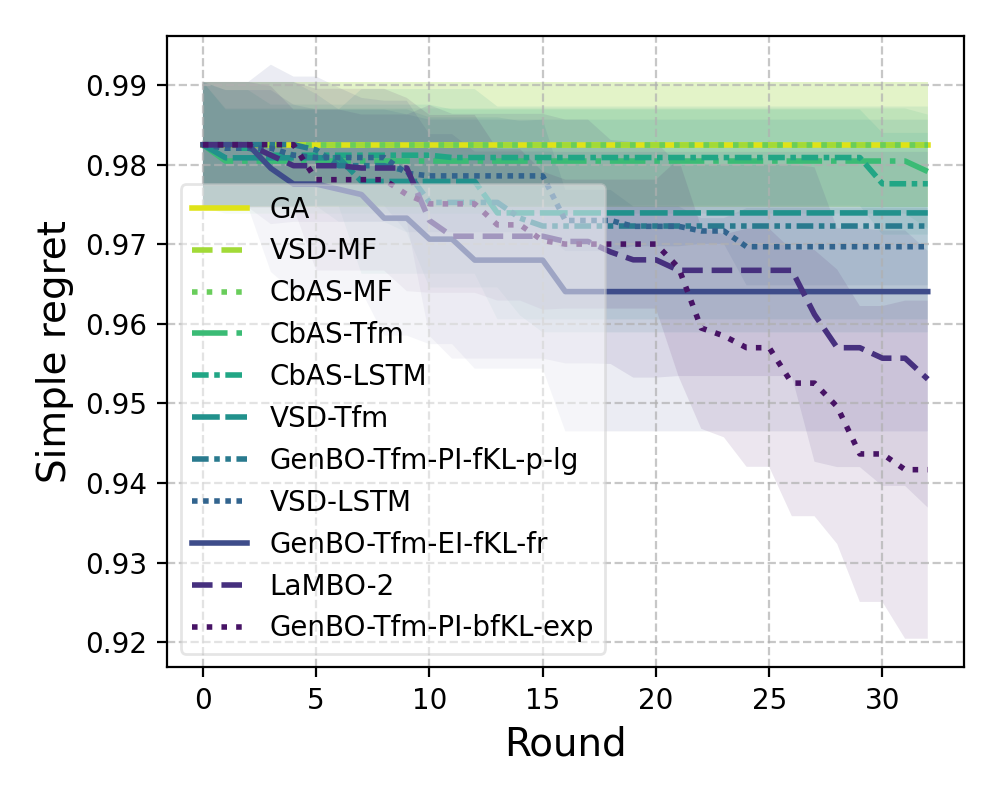}
    }
    \caption{Simple regret of the baseline black box optimizers and the GenBO variants on the Ehrlich closed-form test function protein design task for varying sequence lengths, $M$.}
    \label{fig:ehrlich}
\end{figure}

\subsection{Protein design}

We now consider three protein sequence design tasks where $|\vocab| = 20$ and we have varying $M$. We again use VSD, CbAS and random mutation as baselines, and add to them the guided diffusion based LaMBO-2~\citep{Gruver2023}. GenBO, VSD and CbAS all share the same generative backbone, which is the causal transformer used in \citet{Steinberg2025variational}; VSD and CbAS also use the same CNN-classifier guide used in that work. We present additional architectural information, and additional experimental details in \autoref{asub:protdesign}. We use the black-box implementations in \textsc{poli} for these tasks, and \textsc{poli-baselines} implementations of the random and LaMBO-2 baselines.

The first task we consider is optimization of the Ehrlich functions introduced by \citet{stanton2024closed}. These are challenging biologically inspired parametric closed-form  functions that explicitly simulate nonlinear (epistatic) effects of sequence on outcome. The outcomes are $\observation \in \{-1\} \cup [0, 1]$ where $-1$ is reserved for infeasible sequences. We use the same protocol as in \citet{Steinberg2025variational}, where we optimize sequences of length $M=\{15,32,64\}$ all with motif lengths of 4, and $|\dataset_0|=128$, $T=32$ and $\nbatch = 128$.
The results are summarized in \autoref{fig:ehrlich}. We again see that GenBO variants are able to outperform or match the performance of baselines, with KL-based losses yielding the best performance. In higher dimensions with the longest sequence setting, the benefits of the balanced forward KL loss, with its density minimization effect in areas of lower utility, are more evident. In addition, we note that exponential regularization, corresponding to assuming an exponential dot-product kernel for the RKHS feature space of the model (see \autoref{thr:model-norm}),  allowed for the best performance in higher dimensions. Lastly, in \autoref{fig:batchsize-ablation} (appendix), we present an ablation study on the batch size setting $\nbatch$, showing monotonic improvements, especially for large $\nbatch \geq 32$.

\begin{figure}[htb]
    \centering
    \subcaptionbox{Stability\label{fig:diversity-stability}}{\includegraphics[width=0.315\linewidth]{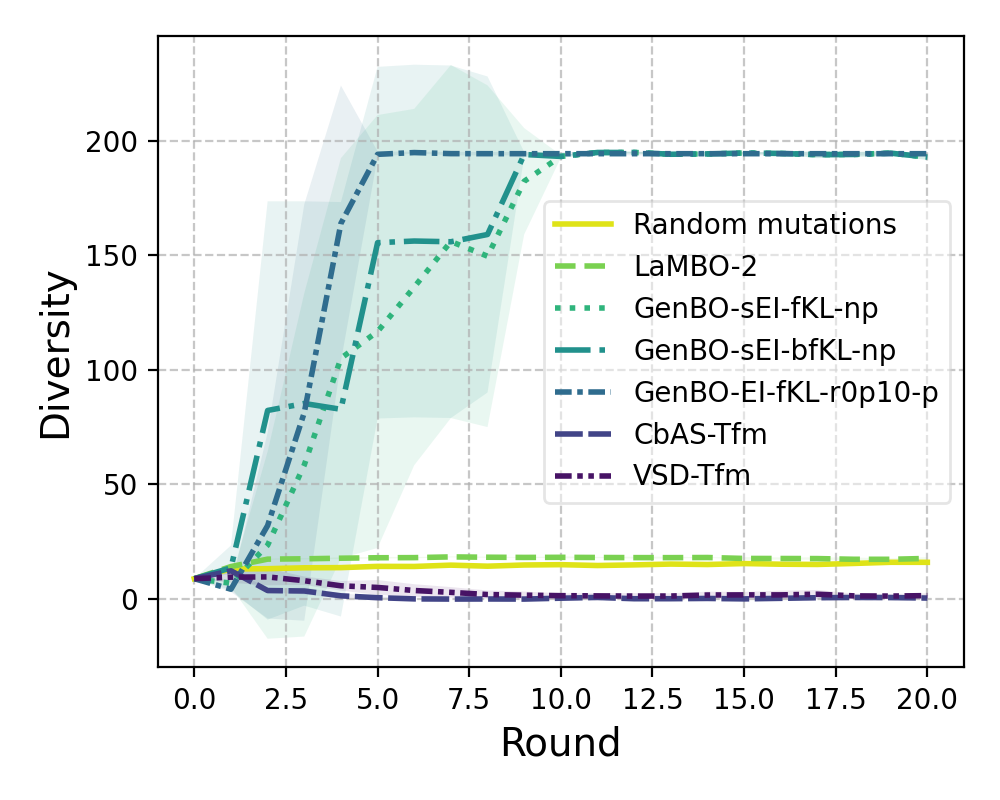}}
    \subcaptionbox{SASA\label{fig:diversity-sasa}}{\includegraphics[width=0.315\linewidth]{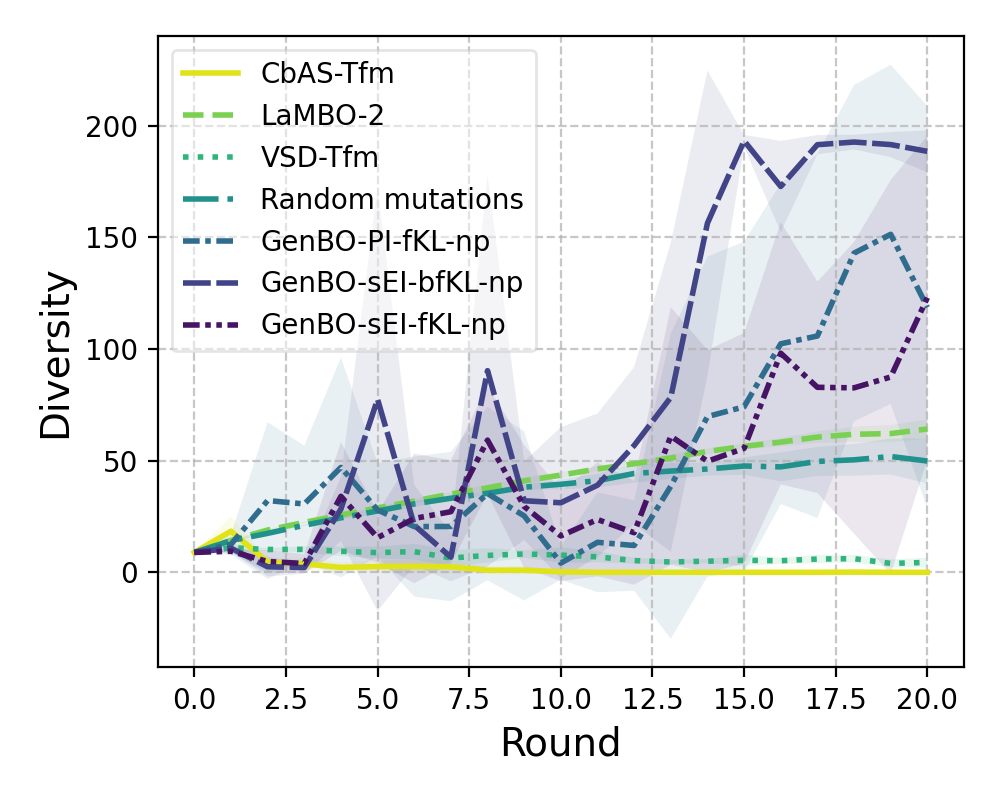}}
    \caption{Batch diversity scores per round on the FoldX protein optimization tasks.}
    \label{fig:diversity}
\end{figure}

For our final set of experiments we present two real protein optimization tasks. These experiments have been adapted from \citet{stanton2022accelerating} where the aims are to maximize the stability and solvent accessible surface area (SASA) of the proteins, respectively. The black-box is the FoldX molecular simulation software \citep{Schymkowitz2005foldx}, and is wrapped by \textsc{poli} \citep{Gonzalez-Duque2024poli}.
We chose the \texttt{mRouge} red fluorescent protein ($M = 228$) as the base protein for the tasks. Both tasks were run for $T=20$ rounds with a batch size of $B=64$ and an initial training set of $|\dataset_0| = 88$ as a subset from \citet{stanton2022accelerating}. Results are summarized in \autoref{fig:stability} for stability and \autoref{fig:sasa} for SASA. All variants of GenBO find the stability task challenging, along with the LaMBO-2 and random baselines. CbAS and especially VSD are better able to stabilize this protein. 
As shown by diversity scores in \autoref{fig:diversity-stability}, which we measure by averaging the Levenshtein distance across the batch in the same way as \citet{Steinberg2025variational}, algorithmic baselines with the lowest diversity yielded top performance, indicating that pure exploitation from around the starting dataset led to the highest outcomes.
However, most variants of GenBO far outperform the baselines on the SASA task, and much more rapidly. We believe this task favors extrapolation away from the prior, due to the high performance of GenBO variants with uninformative prior. In contrast to the stability, the diversity scores show that increasing exploration led to better outcomes for SASA (\autoref{fig:diversity-sasa}). 

\section{Conclusion}

This work introduces Generative Bayesian Optimization (GenBO), a unifying framework that turns any generative model into a sampler whose density tracks BO acquisition functions. We have shown that loss functions over generative models, such as DPO and KL divergences, can be applied to directly learn samplers for batch BO. By eliminating intermediate regression or classification surrogates, GenBO reduces approximation error, simplifies the pipeline to learning just a single generative model, and scales naturally to large batches and high-dimensional or combinatorial design spaces. Theoretical results show convergence to the target distribution, and experiments on text optimization and protein design tasks demonstrate competitive performance with more complex surrogate-guided baselines.
A few challenges remain. For some variants, GenBO requires choosing and fixing the prior before optimization, and its performance depends on sensible settings of utility and temperature parameters, whose theory could be further explored. Another avenue is the adaptation to acquisition strategies not expressible as expected utilities, such as Thompson sampling and upper confidence bound. Despite these caveats, GenBO’s minimal moving parts and principled acquisition-driven training mark a simpler and more scalable alternative to multi-stage guided generation methods.



\bibliographystyle{iclr2026/iclr2026_conference}
\bibliography{references}

@Inbook{Frazier2016,
author="Frazier, Peter I.
and Wang, Jialei",
title="Bayesian Optimization for Materials Design",
bookTitle="Information Science for Materials Discovery and Design",
year="2016",
publisher="Springer International Publishing",
address="Cham",
pages="45--75",
isbn="978-3-319-23871-5",
doi="10.1007/978-3-319-23871-5_3",
}

@article{Calandra2016,
	author = {Calandra, Roberto and Seyfarth, Andr{\'e} and Peters, Jan and Deisenroth, Marc Peter},
	date = {2016/02/01},
	date-added = {2025-09-25 21:01:04 +1000},
	date-modified = {2025-09-25 21:01:04 +1000},
	doi = {10.1007/s10472-015-9463-9},
	id = {Calandra2016},
	isbn = {1573-7470},
	journal = {Annals of Mathematics and Artificial Intelligence},
	number = {1},
	pages = {5--23},
	title = {Bayesian optimization for learning gaits under uncertainty},
	volume = {76},
	year = {2016},
	bdsk-url-1 = {https://doi.org/10.1007/s10472-015-9463-9}}

@book{Rasmussen2006,
address = {Cambridge, MA},
author = {Rasmussen, Carl E. and Williams, Christopher K. I.},
publisher = {The MIT Press},
title = {{Gaussian Processes for Machine Learning}},
year = {2006}
}

@Book{Scholkopf2002,
  Title                    = {Learning with kernels: support vector machines, regularization, optimization, and beyond},
  Author                   = {Sch\"{o}lkopf,Bernhard and Smola,Alexander J.},
  Publisher                = {MIT Press},
  Year                     = {2002},

  Address                  = {Cambridge, Mass}
}

@incollection{Snoek2012,
title = {Practical {B}ayesian Optimization of Machine Learning Algorithms},
author = {Snoek, Jasper and Larochelle, Hugo and Adams, Ryan P.},
booktitle = {Advances in Neural Information Processing Systems 25},
editor = {F. Pereira and C. J. C. Burges and L. Bottou and K. Q. Weinberger},
pages = {2951--2959},
year = {2012},
publisher = {Curran Associates, Inc.}
}

@inproceedings{Chowdhury2017,
address = {Sydney, Australia},
author = {Chowdhury, Sayak Ray and Gopalan, Aditya},
booktitle = {Proceedings of the 34th International Conference on Machine Learning (ICML)},
mendeley-tags = {Bayesian optimisation,Gaussian processes,regret bounds},
title = {{On Kernelized Multi-armed Bandits}},
year = {2017}
}

@article{Shahriari2016,
author = {Shahriari, Bobak and Swersky, Kevin and Wang, Ziyu and Adams, Ryan P. and {De Freitas}, Nando},
journal = {Proceedings of the IEEE},
number = {1},
pages = {148--175},
title = {Taking the human out of the loop: {A} review of {B}ayesian optimization},
volume = {104},
year = {2016}
}

@inproceedings{Srinivas2010,
author = {Srinivas, Niranjan and Krause, Andreas and Kakade, Sham M. and Seeger, Matthias},
booktitle = {Proceedings of the 27th International Conference on Machine Learning (ICML 2010)},
pages = {1015--1022},
title = {{G}aussian Process Optimization in the {B}andit Setting: {N}o Regret and Experimental Design},
year = {2010}
}

@phdthesis{Abbasi-Yadkori2012,
	author = {Abbasi-Yadkori, Yasin},
	school = {University of Alberta},
	title = {{Online Learning for Linearly Parametrized Control Problems}},
	type = {{PhD}},
	year = {2012}
}

@article{Durand2018,
  title={Streaming kernel regression with provably adaptive mean, variance, and regularization},
  author={Durand, Audrey and Maillard, Odalric-Ambrym and Pineau, Joelle},
  journal={Journal of Machine Learning Research},
  volume={19},
  number={1},
  pages={650--683},
  year={2018},
  publisher={JMLR.org}
}

@inproceedings{Oliveira2021,
author = {Oliveira, Rafael and Ott, Lionel and Ramos, Fabio},
booktitle = {37th Conference on Uncertainty in Artificial Intelligence (UAI 2021)},
title = {No-Regret Approximate Inference via {B}ayesian Optimisation},
year = {2021}
}

@article{Gneiting2007strictly,
  title={Strictly proper scoring rules, prediction, and estimation},
  author={Gneiting, Tilmann and Raftery, Adrian E},
  journal={Journal of the American Statistical Association},
  volume={102},
  number={477},
  pages={359--378},
  year={2007},
  publisher={Taylor \& Francis}
}

@inproceedings{Wilson2018,
address = {Montr{\'{e}}al, Canada},
author = {Wilson, James T. and Hutter, Frank and Deisenroth, Marc Peter},
booktitle = {32nd Conference on Neural Information Processing Systems (NeurIPS 2018)},
title = {Maximizing acquisition functions for {Bayesian} optimization},
year = {2018}
}

@inproceedings{
Steinberg2025variational,
title={Variational Search Distributions},
author={Daniel M. Steinberg and Rafael Oliveira and Cheng Soon Ong and Edwin V. Bonilla},
booktitle={The Thirteenth International Conference on Learning Representations},
year={2025},
address={Singapore}
}

@article{Oliveira2022batch,
  title={Batch {B}ayesian optimisation via density-ratio estimation with guarantees},
  author={Oliveira, Rafael and Tiao, Louis and Ramos, Fabio T.},
  journal={Advances in Neural Information Processing Systems},
  volume={35},
  pages={29816--29829},
  year={2022}
}

@inproceedings{Rafailov2023dpo,
author = {Rafailov, Rafael and Sharma, Archit and Mitchell, Eric and Manning, Christopher D. and Ermon, Stefano and Finn, Chelsea},
booktitle = {Advances in neural information processing systems},
pages = {53728--53741},
title = {Direct preference optimization: {Y}our language model is secretly a reward model},
volume = {36},
year = {2023}
}

@inproceedings{Gonzalez-Duque2024,
address = {Vancouver, Canada},
author = {Gonz{\'{a}}lez-Duque, Miguel and Michael, Richard and Bartels, Simon and Zainchkovskyy, Yevgen and Hauberg, S{\o}ren and Boomsma, Wouter},
booktitle = {The 38th Conference on Neural Information Processing Systems (NeurIPS) Datasets and Benchmarks Track},
title = {A survey and benchmark of high-dimensional {B}ayesian optimization of discrete sequences},
year = {2024}
}

@article{Gomez-Bombarelli2018,
author = {G{\'{o}}mez-Bombarelli, Rafael and Wei, Jennifer N and Duvenaud, David and Hern{\'{a}}ndez-Lobato, Jos{\'{e}} Miguel and S{\'{a}}nchez-Lengeling, Benjam{\'{i}}n and Sheberla, Dennis and Aguilera-Iparraguirre, Jorge and Hirzel, Timothy D and Adams, Ryan P and Aspuru-Guzik, Al{\'{a}}n},
doi = {10.1021/acscentsci.7b00572},
journal = {ACS Central Science},
number = {2},
pages = {268--276},
publisher = {American Chemical Society},
title = {Automatic Chemical Design Using a Data-Driven Continuous Representation of Molecules},
volume = {4},
year = {2018}
}

@inproceedings{Gruver2023,
address = {New Orleans, LA, USA},
author = {Gruver, Nate and Stanton, Samuel and Frey, Nathan and Rudner, Tim G.J. and Hotzel, Isidro and Lafrance-Vanasse, Julien and Rajpal, Arvind and Cho, Kyunghyun and Wilson, Andrew Gordon},
booktitle = {Advances in Neural Information Processing Systems},
title = {Protein Design with Guided Discrete Diffusion},
volume = {36},
year = {2023}
}

@article{Shah2016,
author = {Shah, Nihar B. and Balakrishnan, Sivaraman and Bradley, Joseph and Parekh, Abhay and Ramchandran, Kannan and Wainwright, Martin J.},
journal = {Journal of Machine Learning Research},
title = {Estimation from pairwise comparisons: Sharp minimax bounds with topology dependence},
volume = {17},
year = {2016}
}

@inproceedings{Bong2022,
author = {Bong, Heejong and Rinaldo, Alessandro},
booktitle = {Proceedings of the 39th International Conference on Machine Learning},
editor = {Chaudhuri, Kamalika and Jegelka, Stefanie and Song, Le and Szepesvari, Csaba and Niu, Gang and Sabato, Sivan},
pages = {2160--2177},
publisher = {PMLR},
series = {Proceedings of Machine Learning Research},
title = {Generalized Results for the Existence and Consistency of the {MLE} in the {Bradley-Terry-Luce} Model},
volume = {162},
year = {2022}
}

@inproceedings{Song2022lfbo,
address = {Baltimore, Maryland, USA},
author = {Song, Jiaming and Yu, Lantao and Neiswanger, Willie and Ermon, Stefano},
booktitle = {Proceedings of the 39th International Conference on Machine Learning (ICML)},
publisher = {PMLR 162},
title = {A General Recipe for Likelihood-free {B}ayesian Optimization},
year = {2022}
}

@incollection{Steinwart2008,
address = {New York, NY},
author = {Steinwart, Ingo and Christmann, Andreas},
booktitle = {Support Vector Machines},
chapter = {4},
pages = {110--163},
publisher = {Springer},
title = {Kernels and Reproducing Kernel {H}ilbert Spaces},
year = {2008}
}

@inproceedings{Jacot2018,
author = {Jacot, Arthur and Gabriel, Franck and Hongler, Cl\'{e}ment},
title = {Neural tangent kernel: {C}onvergence and generalization in neural networks},
year = {2018},
publisher = {Curran Associates Inc.},
address = {Red Hook, NY, USA},
booktitle = {Proceedings of the 32nd International Conference on Neural Information Processing Systems},
pages = {8580–8589},
numpages = {10},
location = {Montr\'{e}al, Canada},
series = {NIPS'18}
}

@book{Saitoh2016,
author = {Saitoh, Saburou and Sawano, Yoshihiro},
publisher = {Springer},
title = {Theory of Reproducing Kernels and Applications},
year = {2016}
}

@inproceedings{Chowdhury2024rdpo,
author = {Chowdhury, Sayak Ray and Kini, Anush and Natarajan, Nagarajan},
booktitle = {Proceedings of the 41st International Conference on Machine Learning},
editor = {Salakhutdinov, Ruslan and Kolter, Zico and Heller, Katherine and Weller, Adrian and Oliver, Nuria and Scarlett, Jonathan and Berkenkamp, Felix},
pages = {42258--42274},
publisher = {PMLR},
series = {Proceedings of Machine Learning Research},
title = {Provably Robust {DPO}: {A}ligning Language Models with Noisy Feedback},
volume = {235},
year = {2024}
}

@inproceedings{Delyon2018ais,
address = {Montr{\'{e}}al, Canada},
author = {Delyon, Bernard and Portier, Fran{\c{c}}ois},
booktitle = {Advances in Neural Information Processing Systems},
editor = {Bengio, S and Wallach, H and Larochelle, H and Grauman, K and Cesa-Bianchi, N and Garnett, R},
publisher = {Curran Associates, Inc.},
title = {Asymptotic optimality of adaptive importance sampling},
volume = {31},
year = {2018}
}

@inproceedings{Tiao2021,
author = {Tiao, Louis C. and Klein, Aaron and Seeger, Matthias and Bonilla, Edwin V. and Archambeau, Cedric and Ramos, Fabio},
booktitle = {Proceedings of the 38th International Conference on Machine Learning (ICML)},
publisher = {PMLR},
title = {{BORE}: {B}ayesian Optimization by Density-Ratio Estimation},
year = {2021}
}

@inproceedings{Brookes2019cbas,
address = {Long Beach, CA, USA},
author = {Brookes, David and Park, Hahnbeom and Listgarten, Jennifer},
booktitle = {Proceedings of the 36th International Conference on Machine Learning},
editor = {Chaudhuri, Kamalika and Salakhutdinov, Ruslan},
pages = {773--782},
publisher = {PMLR},
series = {Proceedings of Machine Learning Research},
title = {Conditioning by adaptive sampling for robust design},
volume = {97},
year = {2019}
}

@inproceedings{Yun2025dibo,
address = {Vancouver, Canada},
author = {Yun, Taeyoung and Om, Kiyoung and Lee, Jaewoo and Yun, Sujin and Park, Jinkyoo},
booktitle = {Forty-second International Conference on Machine Learning (ICML)},
title = {{Posterior Inference with Diffusion Models for High-dimensional Black-box Optimization}},
year = {2025}
}

@inproceedings{stanton2022accelerating,
  title={Accelerating {Bayesian} optimization for biological sequence design with denoising autoencoders},
  author={Stanton, Samuel and Maddox, Wesley and Gruver, Nate and Maffettone, Phillip and Delaney, Emily and Greenside, Peyton and Wilson, Andrew Gordon},
  booktitle={International Conference on Machine Learning},
  pages={20459--20478},
  year={2022},
  organization={PMLR}
}

@misc{Gonzalez-Duque2024poli,
author = {González-Duque, Miguel and Bartels, Simon and Michael, Richard},
title = {Poli: {A} library of discrete sequence objectives},
url = {https://github.com/MachineLearningLifeScience/poli},
version = {0.0.1},
year = {2024}
}

@book{Garnett2023book,
author = {Garnett, Roman},
publisher = {Cambridge University Press},
title = {Bayesian Optimization},
url = {https://bayesoptbook.com/},
year = {2023}
}

@inproceedings{Azimi2010,
author = {Azimi, Javad and Fern, Alan and Fern, Xiaoli Z.},
booktitle = {Advances in Neural Information Processing Systems 23: 24th Annual Conference on Neural Information Processing Systems 2010 (NIPS 2010)},
title = {Batch {B}ayesian optimization via simulation matching},
year = {2010}
}

@article{stanton2024closed,
  title={Closed-Form Test Functions for Biophysical Sequence Optimization Algorithms},
  author={Stanton, Samuel and Alberstein, Robert and Frey, Nathan and Watkins, Andrew and Cho, Kyunghyun},
  journal={arXiv preprint arXiv:2407.00236},
  year={2024}
}

@inproceedings{Krishnamoorthy23a,
author = {Krishnamoorthy, Siddarth and Mashkaria, Satvik Mehul and Grover, Aditya},
booktitle = {Proceedings of the 40th International Conference on Machine Learning},
editor = {Krause, Andreas and Brunskill, Emma and Cho, Kyunghyun and Engelhardt, Barbara and Sabato, Sivan and Scarlett, Jonathan},
pages = {17842--17857},
publisher = {PMLR},
series = {Proceedings of Machine Learning Research},
title = {Diffusion Models for Black-Box Optimization},
volume = {202},
year = {2023}
}

@article{Lipman2024,
abstract = {Flow Matching (FM) is a recent framework for generative modeling that has achieved state-of-the-art performance across various domains, including image, video, audio, speech, and biological structures. This guide offers a comprehensive and self-contained review of FM, covering its mathematical foundations, design choices, and extensions. By also providing a PyTorch package featuring relevant examples (e.g., image and text generation), this work aims to serve as a resource for both novice and experienced researchers interested in understanding, applying and further developing FM.},
author = {Lipman, Yaron and Havasi, Marton and Holderrieth, Peter and Shaul, Neta and Le, Matt and Karrer, Brian and Chen, Ricky T. Q. and Lopez-Paz, David and Ben-Hamu, Heli and Gat, Itai},
journal = {arXiv e-prints},
title = {Flow Matching Guide and Code},
year = {2024}
}

@article{Schymkowitz2005foldx, 
  title={The FoldX web server: an online force field},
  volume={33},
  ISSN={0305-1048},
  DOI={10.1093/nar/gki387},
  journal={Nucleic Acids Research},
  author={Schymkowitz, Joost and Borg, Jesper and Stricher, Francois and Nys, Robby and Rousseau, Frederic and Serrano, Luis},
  year={2005},
  month=jul,
  pages={W382–W388}
}

@inproceedings{Strupl2022rwr,
author = {{\v{S}}trupl, Miroslav and Faccio, Francesco and Ashley, Dylan R and Srivastava, Rupesh Kumar and Schmidhuber, J{\"{u}}rgen},
booktitle = {Proceedings of the AAAI Conference on Artificial Intelligence},
pages = {8361--8369},
title = {Reward-Weighted Regression Converges to a Global Optimum},
volume = {36},
year = {2022}
}

@article{phuong2022formal,
  title={Formal algorithms for transformers},
  author={Phuong, Mary and Hutter, Marcus},
  journal={arXiv preprint arXiv:2207.09238},
  year={2022}
}

@inproceedings{Dai2022,
address = {New Orleans, LA, USA},
author = {Dai, Zhongxiang and Shu, Yao and Low, Bryan Kian Hsiang and Jaillet, Patrick},
booktitle = {36th Conference on Neural Information Processing Systems (NeurIPS 2022)},
title = {Sample-Then-Optimize Batch Neural {T}hompson Sampling},
year = {2022}
}

@inproceedings{He2015nninit,
  title={Delving deep into rectifiers: {S}urpassing human-level performance on {ImageNet} classification},
  author={He, Kaiming and Zhang, Xiangyu and Ren, Shaoqing and Sun, Jian},
  booktitle={Proceedings of the IEEE International Conference on Computer Vision (ICCV)},
  pages={1026--1034},
  year={2015}
}

@InProceedings{chen2025generalists,
  title = 	 {Generalists vs. Specialists: Evaluating {LLM}s on Highly-Constrained Biophysical Sequence Optimization Tasks},
  author =       {Chen, Angelica and Stanton, Samuel D. and Ding, Frances and Alberstein, Robert G. and Watkins, Andrew M. and Bonneau, Richard and Gligorijevic, Vladimir and Cho, Kyunghyun and Frey, Nathan C.},
  booktitle = 	 {Proceedings of the 42nd International Conference on Machine Learning},
  pages = 	 {9029--9072},
  year = 	 {2025},
  editor = 	 {Singh, Aarti and Fazel, Maryam and Hsu, Daniel and Lacoste-Julien, Simon and Berkenkamp, Felix and Maharaj, Tegan and Wagstaff, Kiri and Zhu, Jerry},
  volume = 	 {267},
  series = 	 {Proceedings of Machine Learning Research},
  month = 	 {13--19 Jul},
  publisher =    {PMLR},
}

@inproceedings{aglietti2025funbo,
title={Fun{BO}: Discovering Acquisition Functions for {B}ayesian Optimization with {FunSearch}},
author={Virginia Aglietti and Ira Ktena and Jessica Schrouff and Eleni Sgouritsa and Francisco Ruiz and Alan Malek and Alexis Bellot and Silvia Chiappa},
booktitle={Forty-second International Conference on Machine Learning},
year={2025},
}

@inproceedings{liu2024large,
title={Large Language Models to Enhance {B}ayesian Optimization},
author={Tennison Liu and Nicol{\'a}s Astorga and Nabeel Seedat and Mihaela van der Schaar},
booktitle={The Twelfth International Conference on Learning Representations},
year={2024},
}

@inproceedings{agarwal2025searching,
title={Searching for Optimal Solutions with {LLM}s via {B}ayesian Optimization},
author={Dhruv Agarwal and Manoj Ghuhan Arivazhagan and Rajarshi Das and Sandesh Swamy and Sopan Khosla and Rashmi Gangadharaiah},
booktitle={The Thirteenth International Conference on Learning Representations},
year={2025},
}

@article{tripp2020sample,
  title={Sample-efficient optimization in the latent space of deep generative models via weighted retraining},
  author={Tripp, Austin and Daxberger, Erik and Hern{\'a}ndez-Lobato, Jos{\'e} Miguel},
  journal={Advances in Neural Information Processing Systems},
  volume={33},
  pages={11259--11272},
  year={2020}
}

@inproceedings{lee2025latent,
title={Latent {B}ayesian Optimization via Autoregressive Normalizing Flows},
author={Seunghun Lee and Jinyoung Park and Jaewon Chu and Minseo Yoon and Hyunwoo J. Kim},
booktitle={The Thirteenth International Conference on Learning Representations},
year={2025},
}

@inproceedings{Kandasamy2018,
address = {Lanzarote, Spain},
author = {Kandasamy, Kirthevasan and Krishnamurthy, Akshay and Schneider, Jeff and Poczos, Barnabas},
booktitle = {Proceedings of the 21st International Conference on Artificial Intelligence and Statistics (AISTATS)},
title = {Asynchronous Parallel {B}ayesian Optimisation via {T}hompson Sampling},
year = {2018}
}

@inproceedings{Teufel2024,
address = {Vancouver, Canada},
author = {Teufel, Felix and Stahlhut, Carsten and Ferkinghoff-Borg, Jesper},
booktitle = {38th Conference on Neural Information Processing Systems (NeurIPS 2024)},
title = {Batched Energy-Entropy acquisition for {B}ayesian Optimization},
year = {2024}
}

@inproceedings{Takeno2020,
author = {Takeno, Shion and Fukuoka, Hitoshi and Tsukada, Yuhki and Koyama, Toshiyuki and Shiga, Motoki and Takeuchi, Ichiro and Karasuyama, Masayuki},
booktitle = {37th International Conference on Machine Learning (ICML 2020)},
pages = {9276--9287},
publisher = {PMLR},
series = {Proceedings of Machine Learning Research},
title = {Multi-fidelity {B}ayesian Optimization with Max-value Entropy Search and its Parallelization},
volume = {119},
year = {2020}
}

@ARTICLE{Bai2022rlhf,
       author = {{Bai}, Yuntao and {Jones}, Andy and {Ndousse}, Kamal and {Askell}, Amanda and {Chen}, Anna and {DasSarma}, Nova and {Drain}, Dawn and {Fort}, Stanislav and {Ganguli}, Deep and {Henighan}, Tom and {Joseph}, Nicholas and {Kadavath}, Saurav and {Kernion}, Jackson and {Conerly}, Tom and {El-Showk}, Sheer and {Elhage}, Nelson and {Hatfield-Dodds}, Zac and {Hernandez}, Danny and {Hume}, Tristan and {Johnston}, Scott and {Kravec}, Shauna and {Lovitt}, Liane and {Nanda}, Neel and {Olsson}, Catherine and {Amodei}, Dario and {Brown}, Tom and {Clark}, Jack and {McCandlish}, Sam and {Olah}, Chris and {Mann}, Ben and {Kaplan}, Jared},
        title = "{Training a Helpful and Harmless Assistant with Reinforcement Learning from Human Feedback}",
      journal = {arXiv e-prints},
     keywords = {Computer Science - Computation and Language, Computer Science - Machine Learning},
         year = 2022,
        month = apr,
          eid = {arXiv:2204.05862},
        pages = {arXiv:2204.05862},
          doi = {10.48550/arXiv.2204.05862},
archivePrefix = {arXiv},
       eprint = {2204.05862},
 primaryClass = {cs.CL},
       adsurl = {https://ui.adsabs.harvard.edu/abs/2022arXiv220405862B},
      adsnote = {Provided by the SAO/NASA Astrophysics Data System}
}

@article{Bradley1952rank,
  title={Rank analysis of incomplete block designs: I. the method of paired comparisons},
  author={Bradley, Ralph Allan and Terry, Milton E},
  journal={Biometrika},
  volume={39},
  number={3/4},
  pages={324--345},
  year={1952},
  publisher={JSTOR}
}

@inproceedings{
Chu2024inversionbased,
title={Inversion-based Latent {B}ayesian Optimization},
author={Jaewon Chu and Jinyoung Park and Seunghun Lee and Hyunwoo J. Kim},
booktitle={The Thirty-eighth Annual Conference on Neural Information Processing Systems},
year={2024},
}

@inproceedings{
Moss2025cowboys,
title={Return of the Latent Space {COWBOYS}: Re-thinking the use of {VAE}s for {B}ayesian Optimisation of Structured Spaces},
author={Henry Moss and Sebastian W. Ober and Tom Diethe},
booktitle={Forty-second International Conference on Machine Learning},
year={2025},
}

@inproceedings{Li2024,
address = {Vienna, Austria},
author = {Li, Yucen Lily and Rudner, Tim G. J. and Wilson, Andrew Gordon},
booktitle = {2024 International Conference on Learning Representations (ICLR)},
publisher = {OpenReview},
title = {A Study of {B}ayesian Neural Network Surrogates for {B}ayesian Optimization},
year = {2024}
}

@inproceedings{Ament2023,
    title = {Unexpected Improvements to Expected Improvement for {B}ayesian Optimization},
    year = {2023},
    booktitle = {37th Conference on Neural Information Processing Systems (NeurIPS 2023)},
    author = {Ament, Sebastian and Daulton, Samuel and Eriksson, David and Balandat, Maximilian and Bakshy, Eytan},
    address = {New Orleans, LA, USA},
}

@article{Knoblauch2022,
  author  = {Jeremias Knoblauch and Jack Jewson and Theodoros Damoulas},
  title   = {An Optimization-centric View on {B}ayes' Rule: {R}eviewing and Generalizing Variational Inference},
  journal = {Journal of Machine Learning Research},
  year    = {2022},
  volume  = {23},
  number  = {132},
}

\newpage

\appendix

\section{Gaussian processes for BO}
\label{sec:gp}
Assume a Gaussian process prior over $\objective$, e.g., $\objective \sim \gp(0, \kernel)$, where $\kernel:\domain\times\domain\to\R$ is a positive-definite kernel \citep{Rasmussen2006}. Then, given a set of observations $\dataset_\nobs := \{\location_i, \observation_i\}_{i=1}^\nobs$, corrupted by Gaussian noise $\obsnoise\sim\normal(0,\sigma_\obsnoise^2)$, the posterior $\objective|\dataset_\nobs \sim \gp(\gpmean_\nobs, \kernel_\nobs)$ is available in closed form with mean and covariance function given by:
\begin{align}
    \gpmean_\nobs(\location) &:= \vec\kernel_\nobs(\location)^\transpose (\Kernel_\nobs + \sigma_\obsnoise^2\eye)^{-1} \observations_\nobs \label{eq:gp-post-mean}\\
    \kernel_\nobs(\location,\location') &:= \kernel(\location,\location') - \vec\kernel_\nobs(\location)^\transpose (\Kernel_\nobs + \sigma_\obsnoise^2\eye)^{-1}\vec\kernel_\nobs(\location')\label{eq:gp-post-cov}\\
    \sigma_\nobs^2(\location) &:= \kernel_\nobs(\location,\location), \label{eq:gp-post-var}
\end{align}
where $\vec\kernel_\nobs(\location) := [\kernel(\location,\location_i)]_{i=1}^\nobs \in \R^\nobs$, $\Kernel_\nobs := [\kernel(\location_i,\location_j)]_{i,j=1}^\nobs \in \R^{\nobs\times\nobs}$, $\observations_\nobs := [\observation_i]_{i=1}^\nobs \in \R^\nobs$, for $\location,\location'\in\domain$. With these closed-form expressions, GP models allow BO algorithms to quantify uncertainty and assess expected utilities of their decisions. However, note that, due to matrix inversions, exact GP inference incurs a computational cost of $\bigo(\nobs^3)$. Hence, one often has to resort to low-rank approximations to make GP predictions tractable in cases involving large amounts of data, such as batch evaluations with large batch size. Alternatively, one may completely discard the GP models and use other surrogates, such as neural networks, and there has been an increasing literature on how to reliably quantify uncertainty for BO when using these models \citep{Li2024}.



\section{Learning parametric models with RKHS convex losses}
\label{sec:analysis}
In this section, we consider the general problem of learning a function $\model_*$ with a parametric model $\model:\domain\times\paramspace\to\R$, where the parameter space $\paramspace$ is an arbitrary finite-dimensional vector space. Most existing results in the Bayesian optimization and bandits literature for learning these models from inherently dependent data are only valid for linear models or kernel machines. As we will consider arbitrary generative models, we need to derive convergence results applicable to a wider class of models, accommodating popular modern frameworks. 
To do so, we will assume that there exists a reproducing kernel Hilbert space (RKHS) containing the models and the target log-density $\model_* = \log p_\utility^*$ for a given (fixed) utility function $\utility$.

\subsection{Main definitions}

We will need the following definitions to state our main assumptions and results.

\begin{definition}[RKHS]
    \label{def:rkhs}
    Let $\domain$ be a non-empty set. A Hilbert space $\Hspace$ of real-valued functions over $\domain$ is called a reproducing kernel Hilbert space if function evaluations are bounded, that is:
    \begin{equation*}
        \forall \location\in\domain, \quad \exists \anyconstant_\location < \infty: \quad  \forall \anyfunction\in\Hspace, \quad \anyfunction(\location) \leq \anyconstant_\location\norm{\anyfunction}_{\Hspace}.
    \end{equation*}
\end{definition}

Any Hilbert space $\Hspace$ satisfying the condition above is associated with a unique positive-semidefinite kernel $\kernel:\domain\times\domain\to\R$ with the reproducing property \citep{Scholkopf2002}:
\begin{equation}
    \forall\anyfunction\in\Hspace \qquad \anyfunction(\location) = \inner{\anyfunction, \kernel(\cdot, \location)}_{\Hspace}, \qquad \forall\location\in\domain,
\end{equation}
where $\inner{\cdot, \cdot}_{\Hspace}$ denotes the inner product in $\Hspace$, which also defines the norm $\norm{\anyfunction}_{\Hspace} = \sqrt{\inner{\anyfunction, \anyfunction}}_{\Hspace}$, for $\anyfunction\in\Hspace$. In fact, it can be shown that $\Hspace$ is spanned by $\{\kernel(\cdot, \location)\}_{\location\in\domain}$ \citep[Thm. 4.21]{Steinwart2008}. Hence, given a positive-semidefinite kernel $\kernel$, we will use the notation $\inner{\cdot, \cdot}_\kernel$ and $\norm{\cdot}_\kernel$ to denote the inner product and the norm, respectively, in the corresponding RKHS $\Hspace_\kernel$.

\begin{definition}[Strong convexity]
	\label{def:sc}
	A differentiable function $\objective: \Hspace\to\R$ on a Hilbert space $\Hspace$ is $\alpha$-strongly convex over $\Sspace \subseteq \Hspace$, for a given $\alpha > 0$, if:
	\begin{equation*}
		\forall \anyfunction, \anyfunction' \in \Sspace, \quad \objective(\anyfunction) \geq \objective(\anyfunction') + \inner{\nabla\objective(\anyfunction'), \anyfunction - \anyfunction'} + \frac{\alpha}{2} \norm{\anyfunction - \anyfunction'}_\Hspace^2\,.
	\end{equation*}
\end{definition}

\begin{definition}[Smoothness]
	\label{def:smoothness}
	A function $\objective: \Hspace\to\anotherspace$ between Hilbert spaces $\Hspace$ and $\anotherspace$ is $\smoothness$-smooth over $\Sspace \subseteq \Hspace$, for a given $\smoothness > 0$, if:
	\begin{equation}
		\forall \anyfunction, \anyfunction' \in \Sspace, \quad \norm{\objective(\anyfunction) - \objective(\anyfunction')}_\anotherspace \leq \smoothness \norm{\anyfunction - \anyfunction'}_\Hspace\,.
	\end{equation}
\end{definition}

\begin{definition}[Sub-Gaussianity]
	\label{def:sub-g}
	A random variable $\vnoise$ taking values in a Hilbert space $\Hspace$ is said to be $\vncov$-sub-Gaussian, given a positive-definite trace-class operator $\vncov:\Hspace\to\Hspace$, if:
	\begin{equation}
		\forall \anyfunction\in\Hspace, \quad \expectation[\exp\inner{\anyfunction, \vnoise}] \leq \exp \left( \frac{1}{2} \inner{\anyfunction, \vncov \anyfunction} \right).
	\end{equation}
	Likewise, a $\Hspace$-valued stochastic process $\{\vnoise_\nobs\}_{\nobs=1}^\infty$ adapted to the filtration $\{\filtration_\nobs\}_{\nobs=0}^\infty$ is conditionally $\vncov$-sub-Gaussian if the following almost surely holds:
	\begin{equation}
		\forall \anyfunction\in\Hspace, \quad \expectation[\exp\inner{\anyfunction, \vnoise_\nobs} \mid \filtration_{\nobs-1}] \leq \exp \left( \frac{1}{2} \inner{\anyfunction, \vncov \anyfunction} \right), \quad \forall \nobs \in \N.
	\end{equation}
\end{definition}

\subsection{Auxiliary results}
\begin{lemma}
	\label{thr:model-rkhs}
	Let $\model: \domain\times\paramspace\to\R$ represent a class of models parameterized by $\parameters\in\paramspace$. Assume that $\model(\location;\cdot) \in \Hspace_\paramspace$, for all 
	$\location\in\domain$, where $\Hspace_\paramspace$ is a reproducing kernel Hilbert space associated with a positive-definite kernel $\kernel_\paramspace:\paramspace\times\paramspace\to\R$. It then follows that:
	\begin{equation}
		\Hspace_\model := \{\anyfunction: \domain \to \R \mid \exists \weight\in\Hspace_\paramspace: \anyfunction(\location) = \inner{\weight, \model(\location, \cdot)}_{\Hspace_\paramspace}, \forall \location\in\domain \}
	\end{equation}
	equipped with the norm:
	\begin{equation}
		\norm{\anyfunction}_{\Hspace_\model} := \inf\{\norm{\weight}_{\Hspace_\paramspace} : \weight \in \Hspace_\paramspace, \anyfunction(\location) = \inner{\weight, \model(\location, \cdot)}_{\Hspace_\paramspace}, \forall\location\in\domain \}
		\label{eq:rkhs-norm}
	\end{equation}
	constitutes the unique RKHS where $\kernel_\model: (\location, \location') \mapsto \inner{\model(\location, \cdot), \model(\location', \cdot)}_{\Hspace_\paramspace}$ is the reproducing kernel.
\end{lemma}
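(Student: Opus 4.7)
\textbf{Proof plan for \autoref{thr:model-rkhs}.} The structure of the statement makes it natural to view $\Phi:\domain\to\Hspace_\paramspace$ defined by $\Phi(\location) := \model(\location, \cdot)$ as a feature map, so the candidate kernel $\kernel_\model(\location, \location') = \inner{\Phi(\location), \Phi(\location')}_{\Hspace_\paramspace}$ is automatically symmetric and positive semi-definite, since any Gram matrix of $\kernel_\model$ is the Gram matrix of the vectors $\{\Phi(\location_i)\}_i$ in $\Hspace_\paramspace$. The plan is to build an RKHS out of this feature map via the standard feature-space construction and then check it coincides with $(\Hspace_\model, \norm{\cdot}_{\Hspace_\model})$ as defined in \eqref{eq:rkhs-norm}; uniqueness then follows from the Moore--Aronszajn theorem.

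First I would introduce the bounded linear operator $T: \Hspace_\paramspace \to \R^\domain$ given by $(T\weight)(\location) := \inner{\weight, \Phi(\location)}_{\Hspace_\paramspace}$. By construction, $\Hspace_\model$ as defined in the statement is exactly $T(\Hspace_\paramspace)$. The subspace $\ker T$ is closed in $\Hspace_\paramspace$ (preimage of $\{0\}$ under a continuous linear map), so $\Hspace_\paramspace = \ker T \oplus (\ker T)^\perp$ and $T$ restricted to $(\ker T)^\perp$ is a bijection onto $\Hspace_\model$. I would then show that for any $\anyfunction \in \Hspace_\model$ the infimum in \eqref{eq:rkhs-norm} is attained by the unique $\weight^\star \in (\ker T)^\perp$ with $T\weight^\star = \anyfunction$, because any other representer $\weight = \weight^\star + \weight^0$ with $\weight^0 \in \ker T$ satisfies $\norm{\weight}^2 = \norm{\weight^\star}^2 + \norm{\weight^0}^2 \geq \norm{\weight^\star}^2$ by orthogonality. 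Hence $\norm{\anyfunction}_{\Hspace_\model} = \norm{\weight^\star}_{\Hspace_\paramspace}$, and $T|_{(\ker T)^\perp}:(\ker T)^\perp\to\Hspace_\model$ is an isometric isomorphism, so $(\Hspace_\model, \norm{\cdot}_{\Hspace_\model})$ inherits a Hilbert space structure with inner product $\inner{T\weight^\star, T\weight^{\star\prime}}_{\Hspace_\model} = \inner{\weight^\star, \weight^{\star\prime}}_{\Hspace_\paramspace}$.

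Next I would verify the reproducing property. For any $\location\in\domain$, the element $\Phi(\location)$ itself gives $T\Phi(\location) = \kernel_\model(\cdot, \location)$, so $\kernel_\model(\cdot, \location) \in \Hspace_\model$. Decomposing $\Phi(\location) = \Phi(\location)^\star + \Phi(\location)^0$ with $\Phi(\location)^\star \in (\ker T)^\perp$ and $\Phi(\location)^0 \in \ker T$, for any $\anyfunction = T\weight^\star \in \Hspace_\model$ with $\weight^\star \in (\ker T)^\perp$ I would compute
\begin{equation*}
    \inner{\anyfunction, \kernel_\model(\cdot, \location)}_{\Hspace_\model}
    = \inner{\weight^\star, \Phi(\location)^\star}_{\Hspace_\paramspace}
    = \inner{\weight^\star, \Phi(\location)}_{\Hspace_\paramspace}
    = \anyfunction(\location),
\end{equation*}
where the second equality uses $\weight^\star \perp \Phi(\location)^0 \in \ker T$. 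This shows $\kernel_\model$ is a reproducing kernel for $\Hspace_\model$. Uniqueness of the RKHS associated with a given positive-semidefinite kernel is the content of the Moore--Aronszajn theorem (see, e.g., \citet[Thm.~4.21]{Steinwart2008}), which closes the argument.

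The main technical obstacle is the first step: showing that the infimum norm in \eqref{eq:rkhs-norm} is actually a (Hilbert) norm rather than only a semi-norm, because the representer $\weight$ of a given $\anyfunction$ need not be unique. Passing to the quotient $\Hspace_\paramspace/\ker T$, equivalently to the orthogonal complement $(\ker T)^\perp$, resolves this cleanly and turns the infimum into an honest attained minimum, after which the reproducing property and uniqueness are routine.
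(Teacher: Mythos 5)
Your proposal is correct and takes essentially the same route as the paper: the paper's proof simply observes that $\location\mapsto\model(\location,\cdot)$ is a feature map into $\Hspace_\paramspace$ and cites the standard feature-space RKHS construction (Steinwart and Christmann, Thm.\ 4.21), which is exactly the argument you carry out in full — quotienting by $\ker T$, attaining the infimum on $(\ker T)^\perp$, verifying the reproducing property, and invoking Moore--Aronszajn for uniqueness. No gaps; you have just unpacked the cited theorem.
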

\begin{proof}
	This is a direct application of classic RKHS results \citep[e.g.,][Thm. 4.21]{Steinwart2008} where we are treating $\feature:\location\mapsto\model(\location, \cdot)$ as a feature map mapping into an existing Hilbert space $\Hspace_\paramspace$ and taking advantage of its structure to define a new one.
\end{proof}

\begin{remark}
	\label{thr:model-norm}
	The RKHS $\Hspace_\model$ described above has the special property that, for any $\parameters\in\paramspace$, the RKHS norm of the model is given by:
	\begin{equation}
		\norm{\model(\cdot, \parameters)}_{\Hspace_\model}^2 = \kernel_\paramspace(\parameters,\parameters)\,,
	\end{equation}
	since $\inner{\kernel_\paramspace(\cdot, \parameters), \model(\location, \cdot)}_{\Hspace_\paramspace} = \model(\location, \parameters)$ for all $\location\in\domain$, and $\kernel_\paramspace(\cdot, \parameters)$ is the unique representation of the evaluation functional at $\parameters$ in the RKHS $\Hspace_\paramspace$. The rest follows from the definition in \autoref{eq:rkhs-norm}. Hence, each choice of $\kernel_\paramspace$ gives us a potential RKHS norm regularizer.
\end{remark}

\begin{lemma}[{\citealp[Cor. 3.6]{Abbasi-Yadkori2012}}]
	\label{thr:noise-bound}
	Let $\{\filtration_\iteridx\}_{\iteridx=0}^\infty$ be an increasing filtration, $\{\obsnoise_\iteridx\}_{\iteridx=1}^\infty$ be a real-valued stochastic process, and $\{\feature_\iteridx\}_{\iteridx=1}^\infty$ be a stochastic process taking values in a separable real Hilbert space $\Hspace$, with both processes adapted to the filtration. Assume that $\{\feature_\iteridx\}_{\iteridx=1}^\infty$ is also predictable, i.e.,  $\feature_\iteridx$ is $\filtration_{\iteridx-1}$-measurable, and that $\obsnoise_\iteridx$ is conditionally $\sigma_\obsnoise^2$-sub-Gaussian, for all $\iteridx \in \N$. Then, given any $\delta \in (0,1)$, with probability at least $1-\delta$, 
	\begin{equation*}
		\forall \iteridx \in \N, \quad 
		\flexnorm{\sum_{i=1}^\iteridx \obsnoise_i \feature_i}_{
			(\operator{V} + \features_\iteridx\features_\iteridx^\transpose)^{-1}
		}^2 
		\leq 2\sigma_\obsnoise^2 \log \left(
		\frac{
			\det(\eye + \features_\iteridx^\transpose \operator{V}^{-1}\features_\iteridx)^{\frac{1}{2}}
		}{\delta}
		\right),
	\end{equation*}
	for any positive-definite operator $\operator{V} \succ 0$ on $\Hspace$, and where we set $\features_\iteridx := [\feature_1, \dots, \feature_\iteridx]$.
\end{lemma}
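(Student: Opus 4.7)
The plan is to use the \emph{method of mixtures}, which is the approach of \citet{Abbasi-Yadkori2012}. For each fixed direction $\eta \in \Hspace$, define the exponential process
$$M_\iteridx(\eta) := \exp\left(\inner{\eta, \textstyle\sum_{i=1}^\iteridx \obsnoise_i \feature_i} - \frac{\sigma_\obsnoise^2}{2}\sum_{i=1}^\iteridx \inner{\eta,\feature_i}^2\right).$$
Since $\feature_\iteridx$ is $\filtration_{\iteridx-1}$-measurable and $\obsnoise_\iteridx$ is conditionally $\sigma_\obsnoise^2$-sub-Gaussian (\autoref{def:sub-g}), the conditional expectation of the per-step ratio $M_\iteridx(\eta)/M_{\iteridx-1}(\eta)$ given $\filtration_{\iteridx-1}$ is at most one, so $\{M_\iteridx(\eta)\}_\iteridx$ is a non-negative super-martingale with $M_0 = 1$.

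Next, to remove the dependence on the arbitrary $\eta$, mix against a centered Gaussian prior $\pi$ on $\Hspace$ with covariance operator $\operator{V}^{-1}$ and set $\bar M_\iteridx := \int_\Hspace M_\iteridx(\eta/\sigma_\obsnoise)\,\pi(\diff\eta)$; the rescaling by $\sigma_\obsnoise$ is what produces the weighting matrix $\operator{V}+\features_\iteridx\features_\iteridx^\transpose$ rather than $\operator{V}+\sigma_\obsnoise^2\features_\iteridx\features_\iteridx^\transpose$ inside the final norm. By Fubini/Tonelli, $\bar M_\iteridx$ inherits the non-negative super-martingale property with $\bar M_0 = 1$. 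Because the integrand is log-quadratic in $\eta$, completing the square gives the closed form
$$\bar M_\iteridx = \det\left(\eye + \features_\iteridx^\transpose \operator{V}^{-1}\features_\iteridx\right)^{-1/2} \exp\left(\frac{1}{2\sigma_\obsnoise^2}\flexnorm{\sum_{i=1}^\iteridx \obsnoise_i\feature_i}_{(\operator{V}+\features_\iteridx\features_\iteridx^\transpose)^{-1}}^2\right),$$
where Sylvester's identity $\det(\eye + \operator{V}^{-1}\features_\iteridx\features_\iteridx^\transpose) = \det(\eye + \features_\iteridx^\transpose\operator{V}^{-1}\features_\iteridx)$ is used. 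Applying Ville's maximal inequality to $\bar M_\iteridx$ gives $\prob{\sup_\iteridx \bar M_\iteridx \geq 1/\delta} \leq \delta$; on the complementary event, taking logs and rearranging produces exactly the stated bound uniformly in $\iteridx\in\N$.

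The main obstacle I anticipate is making the Gaussian mixture rigorous on a general separable Hilbert space, since no nondegenerate centered Gaussian measure exists on an infinite-dimensional space and the closed-form determinant would otherwise require a trace-class operator. The standard remedy, which I would adopt, is to note that for each fixed $\iteridx$ all stochastic quantities only interact with the (at most $\iteridx$-dimensional) subspace $\lspan\{\feature_1,\dots,\feature_\iteridx\}$, so one can carry out the Gaussian integration against a well-defined finite-dimensional restriction of $\pi$ and pass to the limit (or work with cylindrical measures). Beyond this functional-analytic bookkeeping, the remainder is elementary Gaussian calculus plus the standard exponential super-martingale maximal inequality.
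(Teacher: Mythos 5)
The paper does not prove this lemma; it imports it verbatim as \citet[Cor.~3.6]{Abbasi-Yadkori2012}, so there is no internal proof to compare against. Your reconstruction is the standard method-of-mixtures argument from that source and is correct: the exponential super-martingale, the Gaussian mixture with covariance $\operator{V}^{-1}$ (with the $\sigma_\obsnoise$ rescaling placed correctly so that the self-normalizing operator is $\operator{V}+\features_\iteridx\features_\iteridx^\transpose$), the Sylvester determinant identity, and Ville's maximal inequality all match, and you correctly flag and resolve the only genuine subtlety in the separable-Hilbert-space setting, namely restricting the mixture to the finite-dimensional span of $\{\feature_1,\dots,\feature_\iteridx\}$.
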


\begin{lemma}[{GP variance upper bound {\citep[Lem. E.5]{Steinberg2025variational}}}]
    \label{thr:gp-variance-convergence}
    Let $\{\location_\nobs\}_{\nobs\geq 1}$ be a sequence of $\domain$-valued random variables adapted to the filtration $\{\filtration_\nobs\}_{\nobs\geq 1}$. For a given $\location\in\domain$, assume that the following holds:
    \begin{equation}
    	\exists \niter_* \in \N: \quad \forall\niter \geq \niter_*, \quad \sum_{\nobs=1}^\niter \prob{\location_\nobs = \location \mid \filtration_{\nobs-1}} \geq \bound_\niter > 0\,,
    \end{equation}
    for a some sequence of lower bounds $\{\bound_\nobs\}_{\nobs\in\N}$. Then, for a bounded kernel $\kernel:\domain\times\domain\to\R$ given observations  at $\{\location_i\}_{i=1}^\nobs$, the following holds with probability 1:
	\begin{equation}
	    \sigma_\nobs^2(\location) \in \set{O}(\bound_\nobs^{-1}).
	\end{equation}
	In addition, if $\bound_\nobs \to \infty$, then $\lim_{\nobs\to\infty} \bound_\nobs\sigma_\nobs^2(\location) \leq \sigma_\obsnoise^2$.
\end{lemma}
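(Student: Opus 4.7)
}
The plan is to reduce the GP variance bound at $\location$ to a counting argument for the number of times $\location$ was queried, and then turn the conditional‑probability lower bound into an almost‑sure lower bound on that count via a martingale argument. First, I would use the standard monotonicity of GP posterior variance with respect to additional observations: dropping any observation not taken exactly at $\location$ can only increase $\sigma_\nobs^2(\location)$, so $\sigma_\nobs^2(\location) \leq \widetilde\sigma^2_{N_\nobs(\location)}(\location)$, where $N_\nobs(\location) := \sum_{i=1}^\nobs \indicator[\location_i = \location]$ and $\widetilde\sigma^2_k(\location)$ denotes the posterior variance at $\location$ from $k$ noisy observations, all placed at $\location$ itself.

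The second step is a direct computation. With all regressors equal to $\location$, the Gram matrix has rank one, so Sherman--Morrison gives
\begin{equation*}
\widetilde\sigma^2_k(\location) \;=\; \frac{\kernel(\location,\location)\,\sigma_\obsnoise^2}{\sigma_\obsnoise^2 + k\,\kernel(\location,\location)} \;\leq\; \frac{\sigma_\obsnoise^2}{k},
\end{equation*}
and therefore $\sigma_\nobs^2(\location) \leq \sigma_\obsnoise^2 / N_\nobs(\location)$ whenever $N_\nobs(\location) \geq 1$.

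It remains to lower bound $N_\nobs(\location)$ in terms of $\bound_\nobs$ almost surely. Setting $p_i := \prob{\location_i = \location \mid \filtration_{i-1}}$ and $S_\nobs := \sum_{i=1}^\nobs p_i$, the hypothesis gives $S_\nobs \geq \bound_\nobs$ for $\nobs \geq \niter_*$, while $M_\nobs := N_\nobs(\location) - S_\nobs$ is a martingale with increments in $[-1,1]$. By the adapted (L\'evy) extension of the second Borel--Cantelli lemma — equivalently, by Azuma--Hoeffding applied on a geometric grid of indices together with Borel--Cantelli — one has $N_\nobs(\location)/S_\nobs \to 1$ almost surely on $\{S_\nobs \to \infty\}$, and more generally $N_\nobs(\location) \geq c\, S_\nobs \geq c\,\bound_\nobs$ eventually, almost surely. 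Combining with the previous step yields $\sigma_\nobs^2(\location) \leq \sigma_\obsnoise^2/(c\,\bound_\nobs)$, i.e.\ $\sigma_\nobs^2(\location) \in \bigo(\bound_\nobs^{-1})$; and in the case $\bound_\nobs \to \infty$, plugging $N_\nobs(\location)/\bound_\nobs \to 1$ into $\bound_\nobs\sigma_\nobs^2(\location) \leq \bound_\nobs\sigma_\obsnoise^2/N_\nobs(\location)$ gives the sharper limit $\limsup_{\nobs\to\infty} \bound_\nobs\sigma_\nobs^2(\location) \leq \sigma_\obsnoise^2$.

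The main obstacle I expect is the a.s.\ passage from the conditional-probability lower bound $S_\nobs \geq \bound_\nobs$ to a pathwise lower bound on the realized count $N_\nobs(\location)$: when $\bound_\nobs \to \infty$ this is a clean application of the conditional Borel--Cantelli lemma, but when $\bound_\nobs$ may remain bounded one has to be more careful and appeal to a uniform Freedman-type martingale concentration inequality to get a $\bigo$ statement valid for all $\nobs$ simultaneously. The remaining ingredients — the Sherman--Morrison computation and the monotonicity of the GP predictive variance in the observation set — are routine.
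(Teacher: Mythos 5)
The paper does not actually prove this lemma: it is imported verbatim from \citet[Lem.~E.5]{Steinberg2025variational}, so there is no in-paper proof to compare against. Your argument is nonetheless correct and is the standard route to statements of this type: (i) the GP posterior variance is non-increasing under additional observations, so $\sigma_\nobs^2(\location)$ is dominated by the variance computed from only the $N_\nobs(\location)$ observations taken exactly at $\location$; (ii) the rank-one Sherman--Morrison computation gives $\widetilde\sigma^2_{k}(\location)=\kernel(\location,\location)\sigma_\obsnoise^2/(\sigma_\obsnoise^2+k\,\kernel(\location,\location))\leq\sigma_\obsnoise^2/k$; and (iii) L\'evy's extension of the second Borel--Cantelli lemma converts the assumed lower bound on $S_\nobs=\sum_{i\leq\nobs}\prob{\location_i=\location\mid\filtration_{i-1}}$ into the almost-sure statement $N_\nobs(\location)/S_\nobs\to 1$ on the event $\{S_\nobs\to\infty\}$. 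One simplification to the step you flag as delicate: when $\bound_\nobs$ stays bounded, no Freedman-type uniform concentration is needed, because $\bound_\nobs^{-1}$ is then bounded away from zero and the trivial bound $\sigma_\nobs^2(\location)\leq\kernel(\location,\location)\leq\bound_\kernel^2$ already yields $\sigma_\nobs^2(\location)\in\bigo(\bound_\nobs^{-1})$; since $S_\nobs$ is nondecreasing, the only nontrivial case is $\limsup_\nobs\bound_\nobs=\infty$, which forces $S_\nobs\to\infty$ and puts you squarely in the conditional Borel--Cantelli regime, giving both the $\bigo(\bound_\nobs^{-1})$ rate and $\limsup_{\nobs\to\infty}\bound_\nobs\sigma_\nobs^2(\location)\leq\sigma_\obsnoise^2$ exactly as you derive them.
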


\subsection{Main assumptions}
\label{sec:assumptions}
\paragraph{Loss.} Given data $\dataset_\nobs := \{\location_i, \observation_i\}_{i=1}^\nobs$, for the analysis, we consider loss functionals defined over an RKHS $\Hspace_\kernel$ with the form:
\begin{equation}
    \Loss_\nobs(\model) = \regfun_\nobs(\model) + \sum_{i=1}^\nobs \weight_i \loss(\observer_i(\model), \mlabel_i)\,, \quad \model \in \Hspace_\kernel,
\end{equation}
where $\loss: \R \times \R \to \R$ is a fixed deterministic function, $\weight_i > 0$ represents importance sampling weights (or $\weight_i \propto 1$ when importance weights are not used), $\observer_i:\Hspace_\kernel\to\R$ represents a bounded linear observation functional (e.g., $\observer_i(\model) = \model(\location_i)$, or $\observer_i(\model) = \model(\location_{i,1}) - \model(\location_{i,2})$), $\mlabel_i$ is given by utility evaluations (e.g., $\mlabel_i = \utility(\observation_i)$, or $\mlabel_{\nobs, i} = \utility(\observation_{i,1}) - \utility(\observation_{i,2})$), for $i\in\{1,\dots, \nobs\}$, and $\regfun_\nobs:\Hspace_\kernel\to\R$ is a regularization functional.

Considering the loss structure above, the target loss functional is given by:
\begin{equation}
    \Loss_*(\model) =\int \expectation[\loss(\observer(\model), \mlabel)] \basemeasure(\diff\observer).
\end{equation}
For the forward KL loss, for example, the target loss is simply $-\int_\domain \prior(\location)\af(\location) \model(\location) \diff\basemeasure(\location)$, where $\basemeasure$ is a suitable base measure over $\domain$ with respect to which the probability densities are defined (e.g., the counting measure if $\domain$ is discrete).
We then define constrained and unconstrained targets as:
\begin{align}
    \model_* &\in \argmin_{\model \in \Hspace_\kernel: \int e^\model \diff\basemeasure = 1} \Loss_*(\model)\\
    \tilde\model_* &\in \argmin_{\model \in \Hspace_\kernel} \Loss_*(\model).
\end{align}

\begin{assumption}[RKHS]
	\label{a:rkhs}
    There exists a reproducing kernel Hilbert space $\Hspace_\kernel$, associated with a positive-definite kernel $\kernel:\domain\times\domain\to\R$, which is bounded, $\sup_{\location\in\domain} \kernel(\location, \location) \leq \bound_\kernel^2$
	for a given $\bound_\kernel > 0$, such that $\model_*$, constants, and the models can be found as elements of $\Hspace_\kernel$, i.e., $\{\model(\cdot; \parameters) \mid \parameters\in\paramspace\} \subset \Hspace_\kernel$.
\end{assumption}

The assumption above allows us to consider functions $\model_*$ that cannot be perfectly approximated by the model, though which still lie in the same underlying Hilbert space $\Hspace_\kernel$. The reproducing kernel assumption is also mild, as it simply means that function evaluations are continuous (i.e., well behaved), which cannot usually be guaranteed in other types of Hilbert spaces, such as, e.g., $\lpspace{2}$-spaces. In fact, we can always find a RKHS that contains the set of models under mild assumptions, such as the minimal construction in \autoref{thr:model-rkhs}. The inclusion of constant functions allows for the unnormalized target $\tilde\model_*$ to also lie in $\Hspace_\kernel$.

\begin{remark}
If an RKHS containing the model class (e.g., $\Hspace_\model$ in \autoref{thr:model-rkhs}) is too small to contain $\model_*$, we can always combine two RKHS to produce a third one containing all the elements of the two. For instance, if $\model_*  \in \Hspace_* \neq \Hspace_\model$ with kernel $\kernel_*:\domain\times\domain\to\R$, we can define $\kernel:= \kernel_* + \kernel_\model$, so that $\Hspace_\kernel := \Hspace_* \oplus \Hspace_\model$ is also a RKHS \citep{Steinwart2008, Saitoh2016}. Such $\Hspace_*$ can be minimal, as any function $\model_*$ defines a kernel $\kernel_*(\location,\location') = \model_*(\location)\model_*(\location')$ for the RKHS formed by the function's linear span $\Hspace_* := \{\alpha \model_* \mid \alpha \in \R\}$. 
\end{remark}

\begin{assumption}[Regularization]
	\label{a:regularization}
	The regularizer $\regfun_\nobs$ is predictable, $\regfactor_\nobs$-strongly convex, twice differentiable, and $\bigo(\regfactor_\nobs)$-smooth, for all $\nobs \geq 1$, where $\{\regfactor_\nobs\}_{\nobs=1}^\infty$ is a non-decreasing sequence of positive real numbers, such that $\regfactor_\nobs$ is at most poly-logarithmic with $\nobs$.
\end{assumption}

\paragraph{Regularization.} Common choices of regularization scheme, such as the squared norm, suffice to satisfy \autoref{a:regularization}. Strong convexity does not require a function to be twice differentiable, but such assumption greatly simplifies our analysis and is common in modern deep learning frameworks. Sublinearity of $\regfactor_\nobs$ allows for the effect of regularization to disappear as the dataset grows, so that the empirical $\Loss_\nobs$ can converge to the target $\Loss_*$ as $\nobs\to\infty$. Despite the definition of a regularization functional over the whole of $\Hspace_\kernel$, following \autoref{thr:model-norm}, we can use any positive-definite kernel $\kernel_\paramspace:\paramspace\times\paramspace\to\R$ compatible with \autoref{thr:model-rkhs} to set $\regfun_\nobs$ such that, over the model space:
\begin{equation}
	\regfun_\nobs(\model_\parameters) = \frac{\regfactor_\nobs}{2} \norm{\model_\parameters}_{\Hspace_\model}^2 = \frac{\regfactor_\nobs}{2}\kernel_\paramspace(\parameters,\parameters), 
\end{equation}
which allows for differentiation with respect to $\parameters$, and whose Hessian $\nabla_\model^2 \regfun_\nobs(\model) = \regfactor_\nobs \idop$ shows that $\regfun_\nobs$ is strongly convex in $\Hspace_\kernel$ for $\regfactor_\nobs > 0$.
In this case, a quadratic regularization penalty $\norm{\parameters}_2^2$ corresponds to the assumption of a linear kernel, i.e., 
$\kernel_\paramspace(\parameters,\parameters') = \parameters\cdot\parameters'$, which might appear quite restrictive, as it assumes that our models are linear functions of the parameters. However, note that, for overparameterized neural networks, at the infinite-width limit the model tends to show linearity in the parameters \citep{Jacot2018}. If we want to be more parsimonious, alternatively, we can choose $\kernel_\paramspace$ as a universal kernel, such as the squared exponential, yet preferably not translation invariant, so that $\kernel_\paramspace(\parameters,\parameters)$ is not simply a constant. One kernel satisfying such assumption would be the exponential dot-product kernel $\kernel_\paramspace(\parameters, \parameters') := \exp (\parameters\cdot\parameters')$, which is universal for continuous functions over compact subsets of $\paramspace$. 
Nevertheless, we do not impose restrictions on the form of the regularization term $\regfun_\nobs$ other than \autoref{a:regularization}. Lastly, $\regfun_\nobs$ can also be stochastic as:
\begin{equation}
    \regfun_\nobs(\model) = \frac{\regfactor_\nobs}{2}\norm{\model - \model_{\nobs,0}}_\kernel^2,
\end{equation}
where $\norm{\cdot}_\kernel$ denotes the norm in a general $\Hspace_\kernel$ satisfying \autoref{a:rkhs}, and $\model_{\nobs,0}$ may correspond to a random initialization or a data-dependent term (e.g., the previous set of optimal parameters).


\begin{assumption}[Loss]
    \label{a:loss}
    For any $\observation\in\R$, the point loss $\loss_\mlabel:=\loss(\cdot, \observation):\R\to\R$ is $\lossfactor$-strongly convex, twice differentiable, and has $\smoothness_\loss$-smooth first-order derivatives. In addition, given any $\observer\in\Hspace_\kernel$, we assume the first-order derivative $\dot{\loss}_\mlabel(\observer(\tilde\model_*))$ is conditionally $\sigma_\loss^2$-sub-Gaussian when $\observation\sim p(\observation|\observer)$.
\end{assumption}

Regarding sub-Gaussianity, the pointwise derivative of the balanced forward KL loss is given by $\dot\loss_\mlabel(\observer(\model)) = -\utility + e^{\model(\location)}$. Therefore, we have that:
\begin{equation}
    \expectation[\dot\loss_\mlabel(\observer(\model)) \mid \location] = e^{\model(\location)} - \af(\location),
\end{equation}
which yields $\tilde\model_*(\location) = \log \af(\location)$ at the unconstrained minimizer $\tilde\model_*$ under mild assumptions on the kernel and the acquisition function. Therefore, if observation noise is light tailed or the utilities are bounded, the pointwise derivatives can shown to be sub-Gaussian. The second derivative is simply $e^{\model(\location)}$, which is strictly positive, yielding an approximate strong convexity (when restricted to any bounded subset of the realizations of $\model(\location)$). The original Bradley-Terry model in the DPO paper \citep{Rafailov2023dpo} is not strongly convex, but its robust version \citep{Chowdhury2024rdpo}, which accounts for preference noise, can be shown to satisfy strong convexity and smoothness.

\begin{assumption}[Weights]
    \label{a:weights}
    We have that $\inf_{\nobs\in\N} \weight_\nobs \geq \bound_\weight$ and $\sup_{\nobs\in\N}\weight_\nobs \leq \bar\bound_\weight$ almost surely, for constants $\bar\bound_\weight, \bound_\weight > 0$.
\end{assumption}

\begin{assumption}[Density]
    \label{a:dense}
    The model class is dense in the normalized subset of $\Hspace_\kernel$:
    \begin{equation*}
        \set{C} := \left\lbrace \model \in \Hspace_\kernel \mmiddle| \int e^{\model} \diff\basemeasure = 1 \right\rbrace.
    \end{equation*}
\end{assumption}

These last two assumptions allow us to bound the randomness of the loss gradients and ensure that the optimal models converge to the target distribution. Density here simply means that, for any $\model \in \set{C}$, we can find a sequence $\{\parameters_i\}_{i=1}^\infty$ such that $\model_{\parameters_i} \to \model$ as $i\to\infty$. Alternatively, the latter can be expressed as, for any $\model\in\set{C}$ and any given $\gap > 0$, there exists $\parameters_\gap\in\paramspace$ such that $\norm{\model - \model_{\parameters_\gap}}_\kernel \leq \gap$. Hence, density is a universal approximation condition within the normalized subset of $\Hspace_\kernel$. As $\Hspace_\kernel$ is just a theoretical construct for us, and not fixed by the algorithm, so that we can even span $\Hspace_\kernel$ from the model class (see \autoref{thr:model-rkhs}), \autoref{a:dense} is relatively mild.

We can now analyze the approximation error with respect to $\model_*$ for the following estimators:\footnote{We are implicitly assuming that such global optima exist. This is true for the optimization in $\Hspace_\kernel$, as $\Loss_\nobs$ is strongly convex over it, but that is not guaranteed for the optimization over the parameter space $\paramspace$.}
\begin{align}
    \parameters_\nobs &\in \argmin_{\parameters\in\paramspace} \Loss_\nobs(\model_\parameters) \\
    \model_\nobs &\in \argmin_{\model\in\Hspace_\kernel} \Loss_\nobs(\model)\,.
\end{align}
The first one gives us the best parametric approximation $\model_{\parameters_\nobs}$ based on the data and is what our algorithm will use. The second estimator corresponds to the non-parametric approximation, which we will use as a tool for our analysis, and not assume as a component of the algorithm. The assumptions above allow us to bound distances between these estimators and the true $\model_*$ as a function of the loss and gradient values. We also consider the normalized version of $\model_\nobs$ as:
\begin{equation}
    \hat\model_\nobs(\location) := \model_\nobs(\location) - \log \int e^{\model_\nobs} \diff\basemeasure, \quad \location\in\domain,
    \label{eq:normed-minimizer}
\end{equation}
which we use as a reference point to bound distances to the parametric models and to the target $\model_*$.

\subsection{Main results}

\lossbounds*
\begin{proof}[Proof of \autoref{thr:loss-bounds}]
	The Hessian of the individual losses can be lower bounded by:
	\begin{equation}
		\begin{split}
			\forall\model\in\Hspace_\kernel, \qquad 
			\nabla_\model^2 \loss(\observer(\model), \mlabel)
			&= \ddot{\loss}_\mlabel(\observer(\model)) \nabla_\model \observer(\model) \otimes \nabla_\model \observer(\model) + \dot{\loss}_\mlabel(\observer(\model)) \nabla_\model^2 \observer(\model)\\
			&= \ddot{\loss}_\mlabel(\observer(\model)) \observer \otimes \observer\\
			&\succeq \lossfactor\observer \otimes \observer, \qquad \forall \mlabel \in \R,\, \forall\observer\in\observerset,
		\end{split}
	\end{equation}
	given that $\nabla_\model \observer(\model) = \nabla_\model \inner{\model, \observer}_\kernel = \observer$. Combining this result with \autoref{a:regularization} and \ref{a:weights}, we get:
	\begin{equation}
		\forall \model \in \Hspace_\kernel, \qquad \nabla_\model^2 \Loss_\nobs(\model) \succeq \regfactor_\nobs\idop + \lossfactor\bound_\weight\sum_{i=1}^\nobs \observer_i \otimes \observer_i =: \Hessian_\nobs \,.
		\label{eq:loss-hessian}
	\end{equation}
	Now applying a first order Taylor expansion to $\Loss_\nobs$ at any $\model\in\Hspace_\kernel$, the error term is controlled by the Hessian $\nabla^2\Loss_\nobs(\bar\model_\nobs)$ at an intermediate point $\bar\model_\nobs = \anyscalar\model_\nobs + (1-\anyscalar)\model$, for some $\anyscalar \in [0,1]$. Hence, expanding $\Loss_\nobs$ around $\model_\nobs$, we have that:
	\begin{equation}
		\begin{split}
			\forall \model \in \Hspace_\kernel, \quad \Loss_\nobs(\model) - \Loss_\nobs(\model_\nobs)
			&= \inner{\nabla \Loss_\nobs(\model_\nobs), \model - \model_\nobs} + \frac{1}{2} \norm{\model - \model_\nobs}_{\nabla^2 \Loss_\nobs(\bar\model_\nobs)}^2\\
			&\geq  \frac{1}{2} \norm{\model - \model_\nobs}_{\Hessian_\nobs}^2\,,
		\end{split}
	\end{equation}
	where we applied the Hessian inequality \eqref{eq:loss-hessian} and the fact that $\nabla\Loss_\nobs(\model_\nobs) = 0$, as $\model_\nobs$ is a minimizer. Thus, the lower bound in \autoref{thr:loss-bounds} follows. Conversely, expanding $\Loss_\nobs$ around any $\model$ and evaluating at $\model_\nobs$, we have:
	\begin{equation}
		\begin{split}
			\forall \model \in \Hspace_\kernel, \quad \Loss_\nobs(\model_\nobs) &= \Loss_\nobs(\model)
			+ \inner{\nabla \Loss_\nobs(\model), \model_\nobs - \model}_\kernel + \frac{1}{2} \norm{\model_\nobs - \model}_{\nabla^2 \Loss_\nobs(\bar\model_\nobs')}^2.
		\end{split}
	\end{equation}
	Rearranging terms yields:
	\begin{equation}
		\begin{split}
			\forall \model \in \Hspace_\kernel, \quad \Loss_\nobs(\model) - \Loss_\nobs(\model_\nobs)
			&= \inner{\nabla \Loss_\nobs(\model), \model - \model_\nobs}_\kernel - \frac{1}{2} \norm{\model - \model_\nobs}_{\nabla^2 \Loss_\nobs(\bar\model_\nobs')}^2\\
			&\leq \sup_{\bar\model \in \Hspace_\kernel}  \inner{\nabla \Loss_\nobs(\model), \bar\model}_\kernel - \frac{1}{2} \norm{\bar\model}_{\nabla^2 \Loss_\nobs(\bar\model_\nobs')}^2\\
			&\leq \sup_{\bar\model \in \Hspace_\kernel}  \inner{\nabla \Loss_\nobs(\model), \bar\model}_\kernel - \frac{1}{2} \norm{\bar\model}_{\Hessian_\nobs}^2\,,
		\end{split}
	\end{equation}
	whose right-hand side is strongly concave and has therefore a unique maximizer at:
	\begin{equation}
		\bar\model = \Hessian_\nobs^{-1} \nabla \Loss_\nobs(\model)\,.
	\end{equation}
	Replacing this result into the previous equation finally leads us to the upper bound in \autoref{thr:loss-bounds}.
\end{proof}

\begin{lemma}
    \label{thr:error-unconstrained}
    Consider the setting in \autoref{thr:loss-bounds}. Then,
    \begin{equation*}
        \forall\nobs\in\N, \quad \lvert \inner{\observer, \tilde{\model}_*}_\kernel - \inner{\observer, \model_\nobs}_\kernel \rvert 
        \leq 
        \norm{\observer}_{\Hessian_\nobs^{-1}}\beta_\nobs(\delta),
        \: \forall \observer\in\Hspace_\kernel,
    \end{equation*}
    where $\beta_\nobs(\delta) := \regfactor_\nobs^{-1/2}\norm{\nabla\regfun_\nobs(\tilde{\model}_*)}_\kernel + \sigma_\loss\bar\bound_\weight \sqrt{2(\bound_\weight\lossfactor)^{-1}\log(\det(\eye + \bound_\weight\lossfactor\regfactor_\nobs^{-1}\observers_\nobs^\transpose\observers_\nobs)^{1/2}/\delta)}$, and $\observers_\nobs := [\observer_1, \dots, \observer_\nobs]$.
\end{lemma}
\begin{proof}
    By \autoref{thr:loss-bounds}, for any $\model\in\Hspace_\kernel$, we have that:
    \begin{equation}
        \begin{split}
            \lvert \inner{\observer, \model_\nobs}_\kernel - \inner{\observer, \model}_\kernel \rvert
            &= \lvert \inner{\observer, \model_\nobs - \model}_\kernel \rvert\\
            &= \lvert \inner{\Hessian_\nobs^{-1/2}\observer, \Hessian_\nobs^{1/2}(\model_\nobs - \model)}_\kernel \rvert\\
            &\leq \norm{\Hessian_\nobs^{-1/2}\observer} \norm{\Hessian_\nobs^{1/2}(\model_\nobs - \model)}\\
            &= \norm{\observer}_{\Hessian_\nobs^{-1}} \norm{\model_\nobs - \model}_{\Hessian_\nobs}\\
            &\leq \norm{\observer}_{\Hessian_\nobs^{-1}} \norm{\nabla\Loss_\nobs(\model)}_{\Hessian_\nobs^{-1}}\,, \quad \forall\observer\in\Hspace_\kernel,
        \end{split}
        \label{eq:error-decomp}
    \end{equation}
    where the first inequality follows by Cauchy-Schwarz, and the last is due to \autoref{thr:loss-bounds}. Expanding the gradient term, we have:
    \begin{equation}
        \begin{split}
            \norm{\nabla\Loss_\nobs(\model)}_{\Hessian_\nobs^{-1}}
            &= \flexnorm{
                \nabla\regfun_\nobs(\model) + \sum_{i=1}^\nobs \weight_i \dot{\loss}_{\observation_i}(\inner{\observer_i, \model}_\kernel) \observer_i
            }_{\Hessian_\nobs^{-1}}\\
            &\leq \norm{\nabla\regfun_\nobs(\model)}_{\Hessian_\nobs^{-1}} 
            + \flexnorm{
                \sum_{i=1}^\nobs \weight_i \dot{\loss}_{\observation_i}(\inner{\observer_i, \model}_\kernel) \observer_i
            }_{\Hessian_\nobs^{-1}}\\
            &\leq \frac{1}{\sqrt{\regfactor}}\norm{\nabla\regfun_\nobs(\model)}_\kernel
            + \flexnorm{
                \sum_{i=1}^\nobs \weight_i \dot{\loss}_{\observation_i}(\inner{\observer_i, \model}_\kernel) \observer_i
            }_{\Hessian_\nobs^{-1}},
        \end{split}
        \label{eq:grad-bound}
    \end{equation}
    where we applied the triangle inequality to obtain the second line and the fact that $\Hessian_\nobs \succ \regfactor_\nobs\idop$ implies $\Hessian_\nobs^{-1} \prec \regfactor_\nobs^{-1}\idop$ led to the last line. For $\model := \tilde{\model}_*$, we can then apply \autoref{thr:noise-bound} to the noisy sum above by setting $\filtration_\iteridx$ as the $\sigma$-algebra generated by the random variables $\{\observer_i, \observation_i\}_{i=1}^\iteridx$ and $\observer_{\iteridx+1}$. Then $\obsnoise_\iteridx := \frac{1}{\sqrt{\bound_\weight\lossfactor}}\weight_\iteridx\dot{\loss}_{\observation_\iteridx}(\inner{\tilde{\model}_*, \observer_\iteridx}_\kernel)$ is conditionally $\sigma_\obsnoise^2$-sub-Gaussian by \autoref{a:loss} with $\sigma_\obsnoise^2 := \frac{\bar\bound_\weight^2 \sigma_\loss^2}{\bound_\weight \lossfactor}$, since $\weight_\iteridx \leq \bar\bound_\weight$ by \autoref{a:weights}, and $\feature_\iteridx := \sqrt{\bound_\weight\lossfactor}\observer_\iteridx$ is predictable, for all $\iteridx\in\N$. An application of \autoref{thr:noise-bound} finally leads us to:
    \begin{equation}
        \flexnorm{
                \sum_{i=1}^\nobs \dot{\loss}_{\observation_i}(\inner{\observer_i, \tilde{\model}_*}_\kernel) \observer_i
            }_{\Hessian_\nobs^{-1}}^2
        \leq \frac{2\bar\bound_\weight^2\sigma_\loss^2}{\bound_\weight\lossfactor} \log \left(
                \frac{
                    \det(\eye + \bound_\weight\lossfactor\regfactor_\nobs^{-1}\observers_\nobs^\transpose \observers_\nobs)^{\frac{1}{2}}
                }{\delta}
            \right)
        \label{eq:loss-noise-bound}
    \end{equation}
    which holds uniformly over all $\nobs\in\N$ with probability at least $1-\delta$. Hence, it follows that:
    \begin{equation}
        \begin{split}
            \forall\nobs\in\N, \quad 
            \norm{\nabla\Loss_\nobs(\tilde{\model}_*)}_{\Hessian_\nobs^{-1}}
            &\leq \frac{1}{\sqrt{\regfactor}} \norm{\nabla\regfun_\nobs(\tilde{\model}_*)}_\kernel
            + \flexnorm{
                \sum_{i=1}^\nobs \weight_i\dot{\loss}_{\observation_i}(\inner{\observer_i, \tilde{\model}_*}_\kernel) \observer_i
            }_{\Hessian_\nobs^{-1}}
            \leq \beta_\nobs(\delta),
        \end{split}
        \label{eq:rkhs-optimal-grad-bound}
    \end{equation}
    with probability at least $1-\delta$. Replacing this result into \autoref{eq:error-decomp} yields the main result.
\end{proof}

\mainthm*
\begin{proof}[Proof of \autoref{thr:error-bound}]
    Fix any $\observer\in\Hspace_\kernel$ and $\nobs\in\N$, the approximation error can then be decomposed as:
    \begin{equation}
        \begin{split}
            \lvert \inner{\observer, \model_*}_\kernel - \inner{\observer, \model_{\parameters_\nobs}}_\kernel \rvert
            &\leq \lvert \inner{\observer, \model_* - \hat\model_\nobs}_\kernel  \rvert
            + \lvert \inner{\observer, \model_{\parameters_\nobs} - \hat\model_\nobs}_\kernel \rvert.\\
        \end{split}
        \label{eq:error-split}
    \end{equation}
    Let $\unitf\in\Hspace_\kernel$ denote the unit constant function, i.e., $\unitf(\location) = 1$, for all $\location\in\domain$. The difference between normalized functions is then such that:
    \begin{equation}
        \begin{split}
            |\inner{\observer, \model_* - \hat\model_\nobs}_\kernel|
            &= \left\lvert \flexinner{\observer, \tilde\model_* - \model_\nobs + \unitf \left( \log \int e^{\model_\nobs} \diff\basemeasure - \log \int e^{\tilde\model_*} \diff\basemeasure \right)}_\kernel \right\rvert\\
            &\leq |\inner{\observer, \tilde\model_* - \model_\nobs}_\kernel| 
            + |\inner{\observer, \unitf}_\kernel| \left\lvert \log \int e^{\model_\nobs} \diff\basemeasure - \log \int e^{\tilde\model_*} \diff\basemeasure \right\rvert
            \label{eq:error-bound-normed-functions}
        \end{split}
    \end{equation}
    We can apply Jensen's inequality to show that the second term on the right-hand side of the inequality is bounded by the expected pointwise error between $\model_\nobs$ and $\tilde\model_*$. To reduce notation clutter, let $\proposal_* := p_\utility^* = \exp \model_*$ and $\Ex[\proposal]{\anyfunction} := \Ex[\location\sim\proposal]{\anyfunction(\location)}$. Indeed, therefore, we have that:
    \begin{equation}
        \begin{split}
            \Ex[\proposal_*]{\tilde\model_* - \model_\nobs}
            &= \Ex[\proposal_*]{-\log\left(\frac{e^{\model_\nobs}}{e^{\tilde\model_*}}\right)}\\
            &\geq -\log \Ex[\proposal_*]{\frac{e^{\model_\nobs}}{e^{\tilde\model_*}}}\\
            &= -\log \left( \frac{1}{\int e^{\tilde\model_*}\diff\basemeasure} \int e^{\tilde\model_*} \frac{e^{\model_\nobs}}{e^{\tilde\model_*}} \diff\basemeasure \right)\\
            &= \log \int e^{\tilde\model_*}\diff\basemeasure - \log \int e^{\model_\nobs} \diff\basemeasure
        \end{split}
    \end{equation}
    by Jensen's inequality on the convex $-\log(\cdot)$. Similarly, repeating the same steps for $\Ex[\hat\proposal_\nobs]{\model_\nobs - \tilde\model_*}$, where $\hat\proposal_\nobs(\location) = e^{\hat\model_\nobs(\location)} = \frac{e^{\model_\nobs(\location)}}{\int e^{\model_\nobs} \diff\basemeasure}$, we get:
    \begin{equation}
        \Ex[\hat\proposal_\nobs]{\model_\nobs - \tilde\model_*} \geq \log \int e^{\model_\nobs} \diff\basemeasure - \log \int e^{\tilde\model_*}\diff\basemeasure.
    \end{equation}
    Combining the two inequalities yields:
    \begin{equation}
        \begin{split}
            \left\lvert \log \int e^{\model_\nobs} \diff\basemeasure - \log \int e^{\tilde\model_*} \diff\basemeasure  \right\rvert
            &\leq
            \max\{ \Ex[\proposal_*]{\tilde\model_* - \model_\nobs},  \Ex[\hat\proposal_\nobs]{\model_\nobs - \tilde\model_*}\}
            \\
            &\leq |\Ex[\proposal_*]{\tilde\model_* - \model_\nobs}| + |\Ex[\hat\proposal_\nobs]{\model_\nobs - \tilde\model_*}|.
        \end{split}
    \end{equation}
    We can now bound the expectations as:
    \begin{equation}
        \begin{split}
            \forall \proposal \in \Pspace(\domain), \quad |\Ex[\proposal]{\tilde\model_* - \model_\nobs}|
            &= |\Ex[\location\sim\proposal]{\tilde\model_*(\location) - \model_\nobs(\location)}|\\
            &= |\Ex[\location\sim\proposal]{\inner{\tilde\model_* - \model_\nobs, \feature(\location)}_\kernel}|\\
            &= |\inner{\tilde\model_* - \model_\nobs, \Ex[\location\sim\proposal]{\feature(\location)}}_\kernel|\\
            &\leq \beta_\nobs(\delta) \norm{\Ex[\proposal]{\feature(\location)}}_{\Hessian_\nobs^{-1}}\\
            &\leq \beta_\nobs(\delta) \Ex[\proposal]{\norm{\feature(\location)}_{\Hessian_\nobs^{-1}}}.
        \end{split}
    \end{equation}
    where the first inequality follows by \autoref{thr:error-unconstrained} and the second is due to Jensen's.
    Therefore,
    \begin{equation}
        \left\lvert \log \int e^{\model_\nobs} \diff\basemeasure - \log \int e^{\tilde\model_*} \diff\basemeasure  \right\rvert
        \leq \beta_\nobs(\delta)
        \left(\Ex[\proposal_*]{\norm{\feature(\location)}_{\Hessian_\nobs^{-1}}} 
        + \Ex[\hat\proposal_\nobs]{\norm{\feature(\location)}_{\Hessian_\nobs^{-1}}} \right).
    \end{equation}
    Finally, applying the bound above to \autoref{eq:error-bound-normed-functions} followed by another application of \autoref{thr:error-unconstrained}, we obtain:
    \begin{equation}
        |\inner{\observer, \model_* - \hat\model_\nobs}_\kernel| \leq \beta_\nobs(\delta)
        \left( 
        \norm{\observer}_{\Hessian_\nobs^{-1}} 
        + |\inner{\observer, \unitf}_\kernel| 
            \left(
            \Ex[\proposal_*]{\norm{\feature(\location)}_{\Hessian_\nobs^{-1}}} 
            + \Ex[\hat\proposal_\nobs]{\norm{\feature(\location)}_{\Hessian_\nobs^{-1}}} 
            \right)
        \right).
        \label{eq:error-constrained}
    \end{equation}
    
    For the remaining term in \autoref{eq:error-split}, we have that:
    \begin{equation}
        \begin{split}
            \lvert \inner{\observer, \model_{\parameters_\nobs}}_\kernel - \inner{\observer, \hat\model_\nobs}_\kernel \rvert
            &\leq \norm{\observer}_{\Hessian_\nobs^{-1}} \norm{\model_{\parameters_\nobs} - \hat\model_\nobs}_{\Hessian_\nobs}\\
            &\leq \norm{\observer}_{\Hessian_\nobs^{-1}}
            \sqrt{
                2(\Loss_\nobs(\model_{\parameters_\nobs}) - \Loss_\nobs(\hat\model_\nobs)) 
            },
        \end{split}
        \label{eq:model-error-decomp}
    \end{equation}
    which follows from \autoref{thr:loss-bounds}. From \autoref{a:dense}, we have that:
    \begin{equation}
        \forall \gap > 0, \quad \exists \parameters_\gap \in \paramspace: \qquad \norm{\model_{\parameters_\gap} - \hat\model_\nobs}_\kernel \leq \gap\,.
    \end{equation}
    At the optimum, we know that $\Loss_\nobs(\model_{\parameters_\nobs}) \leq \Loss_\nobs(\model_\parameters)$, for all $\parameters\in\paramspace$. Therefore, as $\gap\to 0$, by continuity, we have that:
    \begin{equation}
        \begin{split}
            \Loss_\nobs(\model_{\parameters_\nobs}) - \Loss_\nobs(\hat\model_\nobs)
            &\leq \Loss_\nobs(\model_{\parameters_\gap}) - \Loss_\nobs(\hat\model_\nobs) \to 0, 
        \end{split}
        \label{eq:loss-gap}
    \end{equation}
    which implies $\lvert\inner{\observer, \model_{\parameters_\nobs}}_\kernel - \inner{\observer, \hat\model_\nobs}_\kernel \rvert\to 0$ in \autoref{eq:model-error-decomp}. Consequently, the density of the model class allows us to replace pointwise evaluations $\hat\model_\nobs(\location) = \inner{\hat\model_\nobs, \feature(\location)}_\kernel$ by $\model_{\parameters_\nobs}(\location)$, for $\location\in\domain$, so that we can swap $\hat\proposal_\nobs$ in \autoref{eq:error-constrained} for $\proposal_\nobs(\location) = \exp\model_{\parameters_\nobs}(\location)$.
    The main result then follows.
\end{proof}


Despite the model being potentially non-linear and the loss not being required to be least-squares, \autoref{thr:error-bound} shows that we recover the same kind of RKHS-based error bound found in the kernelized bandits literature \citep{Chowdhury2017, Durand2018, Oliveira2021}. Regarding the asymptotic rates, we make the following observations.

\begin{remark}
\label{thr:vanishing-error}
For a finite domain, $\card{\domain} < \infty$, the RKHS becomes finite dimensional with $\dim(\Hspace_\kernel) = \card{\domain}$, resembling a linear model. In this case, \citet{Srinivas2010} provides bounds for the Gram matrix log-determinant in $\beta_\nobs(\delta)$ as $\bigo(\card{\domain}\log\nobs)$, yielding:
\begin{equation*}
    \beta_\nobs(\delta) \in \bigo(\sqrt{\card{\domain}\log\nobs}).
\end{equation*}
\end{remark}


\paragraph{Predictive variance.} We can show a connection between $\norm{\observer}_{\Hessian_\nobs^{-1}}$ and a GP predictive variance. By an application of Woodbury's identity, letting $\widehat\lossfactor := \lossfactor\bound_\weight$, we have that:
\begin{equation}
    \begin{split}
        \norm{\observer}_{\Hessian_\nobs^{-1}}^2 &= \observer^\transpose (\regfactor_\nobs\idop + \widehat\lossfactor \observers_\nobs \observers_\nobs^\transpose )^{-1} \observer\\
        &= \observer^\transpose (\regfactor_\nobs^{-1}\idop - \regfactor_\nobs^{-2}\observers_\nobs(\widehat\lossfactor^{-1}\eye + \regfactor_\nobs^{-1}\observers_\nobs^\transpose\observers_\nobs)^{-1}\observers_\nobs^\transpose) \observer\\
        &= \regfactor_\nobs^{-1}\observer^\transpose (\idop - \observers_\nobs(\regfactor_\nobs\widehat\lossfactor^{-1}\eye + \observers_\nobs^\transpose\observers_\nobs)^{-1}\observers_\nobs^\transpose) \observer\\
        &= \regfactor_\nobs^{-1} (\norm{\observer}_\kernel^2 - \observer^\transpose\observers_\nobs(\regfactor_\nobs\widehat\lossfactor^{-1}\eye + \observers_\nobs^\transpose\observers_\nobs)^{-1}\observers_\nobs^\transpose\observer)\,,
    \end{split} 
\end{equation}
If observations correspond to pointwise evaluations $\observer := \kernel(\cdot, \location) = \feature(\location)$ and $\observer_i := \kernel(\cdot, \location_i) = \feature(\location_i)$, for $\location \in \domain$ and $\{\location_i\}_{i=1}^\nobs \subset\domain$, we end up with:
\begin{equation}
    \begin{split}
        \norm{\observer}_{\Hessian_\nobs^{-1}}^2
        &= \norm{\feature(\location)}_{\Hessian_\nobs^{-1}}^2\\
        &= \regfactor_\nobs^{-1} (\kernel(\location,\location) - \vec\kernel_\nobs(\location)^\transpose (\regfactor_\nobs\widehat\lossfactor^{-1}\eye + \Kernel_\nobs)^{-1}\vec\kernel_\nobs(\location))\\
        &= \regfactor_\nobs^{-1}\sigma_\nobs^2(\location)\,,
    \end{split}
\end{equation}
which corresponds to a scaled version of the posterior predictive variance $\sigma_\nobs^2(\location)$ of a GP \eqref{eq:gp-post-cov}. Moreover, the $\inner{\observer,\unitf}_\kernel$ term in \autoref{thr:error-bound} is simply $\inner{\feature(\location), \unitf}_\kernel = \unitf(\location) = 1$, for $\location\in\domain$, and the remaining terms depend on expectations of $\norm{\feature}_{\Hessian_\nobs^{-1}}$, which is also a function of the predictive variance $\sigma_\nobs^2(\location)$. We can then apply the auxiliary result from VSD (\autoref{thr:gp-variance-convergence}) to show that $\sigma_\nobs^2(\location)$ is $\bigo(\nobs^{-1})$ asymptotically, whenever $\proposal_{\theta_\nobs}(\location)$ has a positive lower bound, allowing for asymptotic convergence of the proposal distributions to the target $p_\utility^* = \exp \model_*$. As a result, considering \autoref{thr:vanishing-error}, we get:
\begin{equation}
    |\model_*(\location) - \model_\nobs(\location)| \in \bigo\left(\sqrt{\frac{\card{\domain}\log\nobs}{\nobs}}\right),
\end{equation}
which vanishes as $\nobs\to\infty$. Therefore, the model converges to the target distribution $p_\utility^*$ under these assumptions. Nevertheless, the algorithm's regret still depends on how fast the time-dependent acquisition function $\af_\iteridx$ concentrates around the optimum $\location^*$ and how much probability mass $\proposal_\iteridx$ places away from $\location^*$ per BO round $\iteridx$, analyses which we leave as subject of future work.

\subsection{Loss functions}
In the following, we present further details on the derivation of the loss functions. We consider a fixed non-negative utility function $\utility: \R \to [0,\infty)$ and the corresponding acquisition function $\af:\domain\to \R$ defined as $\af(\location) = \expectation[\utility(\observation)|\location]$, where the expectation is over the observations distribution.


\paragraph{Forward KL.} Expanding the definition, we have:
\begin{equation}
    \begin{split}
        \kl{p_\utility^*}{\proposal}
        &= \expectation_{\location\sim p_\utility^*}[
            \log p_\utility^*(\location) - \log \proposal(\location)
        ]\\
        &= \expectation_{\location\sim p_\utility^*}[
            \log p_\utility^*(\location)
            ]
            -\expectation_{\location\sim p_\utility^*}[
            \log \proposal(\location)
            ]\\
        &= -\entropy{p_\utility^*}
        - \expectation_{\location\sim p_\utility^*}[
            \log \proposal(\location)
        ].
    \end{split}
\end{equation}
Note that the entropy $\entropy{p_\utility^*}$ of the target is constant. Using importance sampling with a proposal $\proposal_0$, the remaining expectation can be approximated as:
\begin{equation}
    \expectation_{\location\sim p_\utility^*}[
            \log \proposal(\location)
        ]
    = \expectation_{\location\sim \proposal_0}\left[
            \frac{p_\utility^*(\location)}{\proposal_0(\location)}
            \log \proposal(\location)
        \right]
    \approx \frac{1}{\nbatch}\sum_{i=1}^\nbatch 
    \frac{p_\utility^*(\location_i)}{\proposal_0(\location_i)}
            \log \proposal(\location_i),
\end{equation}
for a batch of $\nbatch \geq 1$ samples $\location_i \sim \proposal_0$ sampled \iid from the proposal $\proposal_0$. Having $\iteridx\geq 1$ proposals, instead, we get:
\begin{equation}
    \expectation_{\location\sim p_\utility^*}[
            \log \proposal(\location)
        ]
    \approx \frac{1}{\iteridx\nbatch}\sum_{j=1}^\iteridx \sum_{i=1}^\nbatch 
    \frac{p_\utility^*(\location_{j,i})}{\proposal_{j-1}(\location_{j,i})}
            \log \proposal(\location_{j,i}),
\end{equation}
where $\{\location_{j,i}\}_{i=1}^\nbatch \overset{\iid}{\sim} \proposal_{j-1}$, for $j \in \{1,\dots, \iteridx\}$. We do not have access to the full $p_\utility^*(\location) = \frac{\af(\location)\prior(\location)}{\int_\domain \af(\location')\prior(\location')\diff\basemeasure(\location')}$ due to the intractability of the normalization factor%
\footnote{Recall that $\basemeasure$ represents a base measure of the domain with respect to which the proposal densities are defined, i.e., the counting measure for a discrete domain (our experiments) or the Lebesgue measure for a continuous domain.}
$\int_\domain \af(\location')\prior(\location')\diff\basemeasure(\location')$ nor the full acquisition function $\af(\location) = \expectation[\utility(\observation)|\location]$, as we only observe noisy utilities $\utility(\observation_i)$. The normalization factor, however, is constant, and the acquisition function can be unbiasedly approximated via Monte Carlo. Using single-sample estimates for the latter, we then obtain our final form:
\begin{equation}
    \expectation_{\location\sim p_\utility^*}[
            \log \proposal(\location)
        ]
    \approx \frac{1}{\iteridx\nbatch}\sum_{j=1}^\iteridx \sum_{i=1}^\nbatch 
    \frac{\prior(\location_{j,i})\utility(\observation_{j,i})}{\proposal_{j-1}(\location_{j,i})}
            \log \proposal(\location_{j,i})
    = \frac{1}{\nobs_\iteridx}
    \sum_{i=1}^{\nobs_\iteridx}
    \frac{\prior(\location_i)\utility(\observation_{i})}{\proposal_{i-1}(\location_{i})}
            \log \proposal(\location_{i}),
\end{equation}
where the latter follows by simple re-indexing, with $\nobs_\iteridx := \iteridx\nbatch$. We can also drop the constant $\frac{1}{\nobs_\iteridx}$ during optimization.

\paragraph{Balanced forward KL.} The balanced forward KL arises from the definition of Bregman divergences:
\begin{equation}
    \divergence{p}{q} = \Psi(p) - \Psi(q) - \inner{\nabla \Psi(q), p - q}, \qquad p, q \in \Omega,
\end{equation}
where $\Psi: \Omega \to \R$ is a convex function over a convex subset $\Omega$ of a vector space. In our case, $\Omega$ is the space $\Pspace(\domain)$ of probability measures over the domain $\domain$, which can be embedded as convex subset of $\lpspace{2}(\basemeasure)$ when restricted to measures that have a square-integrable density with respect to the base measure $\basemeasure$. Now consider the negative entropy functional:
\begin{equation}
    \Psi(p) = \int_\domain p(\location) \log p(\location)\diff\basemeasure(\location).
\end{equation}
Its functional gradient is given by:
\begin{equation}
    \nabla\Psi(p) = 1 + \log p.
\end{equation}
Thus, the Bregman divergence with this functional is:
\begin{equation}
    \begin{split}
        \divergence{p}{q}
        &= \Psi(p) - \Psi(q) - \inner{\nabla \Psi(q), p - q}\\
        &= \int_\domain p(\location) \log p(\location)\diff\basemeasure(\location)
        - \int_\domain q(\location) \log q(\location)\diff\basemeasure(\location)
        - \inner{1 + \log q, p - q}_{\lpspace{2}(\basemeasure)}\\
        &= \int p \log p \diff\basemeasure
        - \int q \log q \diff\basemeasure
        - \int p - q \diff\basemeasure - \int (p - q)\log q \diff\basemeasure\\
        &= \int p (\log p - \log q) \diff\basemeasure - \int p \diff\basemeasure + \int q \diff\basemeasure.
    \end{split}
\end{equation}
We could cancel the constants $\int p \diff\basemeasure = \int q \diff\basemeasure = 1$. Instead, we keep the term $\int q \diff\basemeasure$, which after an importance sampling approximation, allows us to have a non-banishing term for observations where $\utility(\observation) = 0$. Namely, using a proposal $\proposal_0$, we have that:
\begin{equation}
    \begin{split}
        \divergence{p_\utility^*}{\proposal}
        &= \int \proposal_0 \frac{p_\utility^*}{\proposal_0} (\log p_\utility^* - \log \proposal) \diff\basemeasure + \int \proposal_0 \frac{\proposal}{\proposal_0} \diff\basemeasure - \int p_\utility^* \diff\basemeasure\\
        &= -\int \proposal_0 \frac{p_\utility^*}{\proposal_0} \log \proposal \diff\basemeasure + \int \proposal_0 \frac{\proposal}{\proposal_0} \diff\basemeasure + \anyconstant,
    \end{split}
\end{equation}
where $\anyconstant := \int \proposal_0 \frac{p_\utility^*}{\proposal_0} \log p_\utility^* \diff\basemeasure - \int p_\utility^*\diff\basemeasure$ is constant.
Dropping constants and following a similar approach to the derivation of the forward KL loss, we get:
\begin{equation}
    \begin{split}
        -\int p_\utility^* \log \proposal \diff\basemeasure + \int \proposal \diff\basemeasure
        &\approx -\sum_{i=1}^{\nobs_\iteridx} \frac{\prior(\location_i)\utility(\observation_i)}{\proposal_{i-1}(\location_i)} \log \proposal(\location_i) - \frac{\proposal(\location_i)}{\proposal_{i-1}(\location_i)}, \qquad \iteridx\geq 1.
    \end{split}
\end{equation}

\section{Additional Experimental Detail}
\label{app:experiment-settings}
This section presents algorithmic settings and implementation details for our experiments. We also present a summary of our experimental results in \autoref{sec:summary} and ablation studies in \autoref{sec:ablation}. In addition to the descriptions in this section, code for our experiments is available online at \url{https://github.com/csiro-funml/generativebo}.

\subsection{Text optimization}\label{asub:textgen}

We use the same annealing threshold scheme for setting $\thresh_t$ as \citet[Eqn.\ 20]{Steinberg2025variational}, where we set $\eta$ such that we begin at $p_0=0.5$ we end at $p_T=0.99$.
For the proposal distribution, we found these short sequences best generated by the simple mean-field categorical model,
\begin{align}
    \proposal(\location | \phi) = \prod^M_{m=1} \mathrm{Categ}(x_m | \mathrm{softmax}(\phi_m))
     \label{eq:mfproposal}
\end{align}
where $x_m \in \vocab$ and $\phi_m \in \R^{|\vocab|}$, and we directly optimize $\phi$. VSD and CbAS use the simple MLP classifier guide in \autoref{fig:cpe-arch}.

\begin{figure}[htb]
    \centering
\begin{minipage}{.45\textwidth}%
\begin{lstlisting}
Sequential(
    Embedding(
        num_embeddings=A,
        embedding_dim=16
    ),
    Dropout(p=0.1),
    Flatten(),
    LeakyReLU(),
    Linear(
        in_features=16 * M,
        out_features=64
    ),
    LeakyReLU(),
    Linear(
        in_features=64,
        out_features=1
    ),
)
\end{lstlisting}
\centering
\vspace{6.35cm}
(a) MLP architecture
\end{minipage}%
\begin{minipage}{.45\textwidth}%
\begin{lstlisting}
Sequential(
    Embedding(
        num_embeddings=A,
        embedding_dim=E
    ),
    Dropout(p=0.2),
    Conv1d(
        in_channels=E,
        out_channels=C,
        kernel_size=Kc,
    ),
    LeakyReLU(),
    MaxPool1d(
        kernel_size=Kx,
        stride=Sx,
    ),
    Conv1d(
        in_channels=C,
        out_channels=C,
        kernel_size=Kc,
    ),
    LeakyReLU(),
    MaxPool1d(
        kernel_size=Kx,
        stride=Sx,
    ),
    Flatten(),
    LazyLinear(
        out_features=H
    ),
    LeakyReLU(),
    Linear(
        in_features=H,
        out_features=1
    ),
)
\end{lstlisting}
\centering
(b) CNN architecture
\end{minipage}%
\caption{
    Classifier architectures used for VSD and CbAS in the experiments using PyTorch syntax. $\texttt{A} = \card{\vocab}$, $\texttt{M} = M$, and we give all other parameters in \autoref{tab:cnnsettings} if not directly indicated.}
    \label{fig:cpe-arch}
\end{figure}

\subsection{Protein design}\label{asub:protdesign}

We use the same threshold function and setting for all of the protein design experiments as in \autoref{asub:textgen}. However, these tasks require a more sophisticated generative model that can capture local and global relationships that relate to a protein's 3D structure. For this we use the auto-regressive (causal) transformer architecture also used in \citet{Steinberg2025variational},
\begin{align}
    \proposal(\location | \phi) &= \mathrm{Categ}(x_1 | \mathrm{softmax}(\phi_1))
    \prod^M_{m=2} \proposal(x_m|x_{1:m-1}, \phi_{1:m}), \quad \textrm{where} \nonumber \\
    \proposal(x_m|x_{1:m-1}, \phi_{1:m}) &=
    \mathrm{Categ}(x_m | \mathrm{DTransformer}(x_{1:m-1}, \phi_{1:m})).
\end{align}
For the latter, see Algorithm 10 and 14 in \citet{phuong2022formal} for maximum likelihood training and sampling implementation details, respectively. We give the architectural configuration for the transformers in each task in \autoref{tab:qsettings}, and the classifier CNN used by VSD and CbAS is in \autoref{fig:cpe-arch}.

\begin{table}[htb]
    \centering
    \begin{tabular}{r|c c c c c c}
        Configuration & Stability & SASA & Ehrlich 15 & Ehrlich 32 & Ehrlich 64 \\
        \hline
        Layers & 2 & 2 & 2 & 2 & 2 \\
        Feedforward network & 256 & 256 & 32 & 64 & 128 \\
        Attention heads & 4 & 4 & 1 & 2 & 3 \\
        Embedding size & 64 & 64 & 10 & 20 & 30
    \end{tabular}
    \caption{Transformer backbone configuration.}
    \label{tab:qsettings}
\end{table}

\begin{table}[htb]
    \centering
    \begin{tabular}{r|c c c c c c}
        Configuration & Stability & SASA & Ehrlich 15 & Ehrlich 32 & Ehrlich 64 \\
        \hline
        \texttt{E} & 16 & 16 & 10 & 10 & 10 \\
        \texttt{C} & 96 & 96 & 16 & 16 & 16 \\
        \texttt{Kc} & 7 & 7 & 4 & 7 & 7 \\
        \texttt{Kx} & 5 & 5 & 2 & 2 & 2 \\
        \texttt{Sx} & 4 & 4 & 2 & 2 & 2 \\
        \texttt{H} & 192 & 192 & 128 & 128 & 128
    \end{tabular}
    \caption{CNN guide configuration for VSD and CbAS}
    \label{tab:cnnsettings}
\end{table}

We use the following Ehrlich function configurations:
\begin{description}
    \item[$M=15$:] motif length = 4, no.\ motifs = 2, quantization = 4
    \item[$M=32$:] motif length = 4, no.\ motifs = 2, quantization = 4 
    \item[$M=64$:] motif length = 4, no.\ motifs = 8, quantization = 4 
\end{description}

\subsection{GenBO settings}

\begin{table}[htb]
    \centering
    \begin{tabular}{l|l}
        Acronym & Meaning \\
        \hline
        EI & Expected Improvement\\
        PI & Probability of Improvement\\
        sEI & Soft Expected Improvement, i.e., $\mathrm{softplus}(\observation-\thresh)$\\
        fKL & Forward KL loss\\
        bfKL & Balanced forward KL loss\\
        rPL & Robust preference loss\\
        MF & Mean-field categorical proposal model\\
        Tfm & Transformer proposal model\\
        fr & More frequent regularization (change in $\regfactor_\nobs$ schedule rate) \\
        r0p10 & Base regularization factor set to $\regfactor_0 := 0.1$ \\
        exp & Exponential regularizer, i.e., $\regfun_\nobs(\parameters) := \regfactor_\nobs\exp \norm{\parameters - \parameters_0}_2^2$\\
        np & No (informative) prior, i.e., $\prior(\location) \propto 1$\\
        p & Pre-trained prior, learned from initial (randomly initialized) data $\dataset_0$\\
        lg & Importance weights\\
        lr0p10 & Learning rate setting for training the generative model (e.g., 0.1 in this case)\\
        pcmin0p50 & Minimum percentile for threshold $\thresh_\iteridx$ annealing schedule (e.g., 50\% in this case)\\
        pcmax0p90 & Maximum percentile for threshold $\thresh_\iteridx$ annealing schedule (e.g., 90\% in this case)\\
    \end{tabular}
    \caption{GenBO experiment settings acronyms}
    \label{tab:genbo-settings}
\end{table}

\autoref{tab:genbo-settings} presents our settings for the different GenBO variants across experiments. The settings for our proposal models followed VSD's configurations. Our regularization scheme penalized the Euclidean distance between the model's parameters and their random initialization \citep{He2015nninit} with $\regfun_\nobs(\parameters) := \regfactor_\nobs\norm{\parameters - \parameters_0}_2^2$, using an annealed regularization factor $\regfactor_\nobs := \regfactor_0 \log^2 \nobs$, similar to \citet{Dai2022}, which ensures enough exploration, while still $\frac{\regfactor_\nobs}{\nobs} \to 0$ as $\nobs\to\infty$, allowing for convergence to the optimal $\parameters_*$. For threshold-based utilities, we mainly set the quantile threshold $\thresh_\iteridx$ to follow an annealing schedule ranging from the 50\textsuperscript{th} (i.e., the median) to the 99\textsuperscript{th} percentile of the observations marginal distribution for both GenBO and VSD, where the percentile $\gamma_\iteridx$ corresponding the quantile is updated as $\gamma_\iteridx := \gamma_{\iteridx-1}^\eta$, where $\eta := \left(\frac{\log \gamma_\niter}{\log \gamma_0}\right)^\frac{1}{\niter-1} \in (0,1)$.

\subsection{Results summary}
\label{sec:summary}
Besides the plots in \autoref{sec:experiments}, we summarize the final results in \autoref{tab:results} and \ref{tab:results-foldx}.

\begin{table}[t]
\centering
\begin{tabular}{lllll}
\toprule
 & ALOHA & Ehrlich-15 & Ehrlich-32 & Ehrlich-64 \\
\midrule
Random mut. & 3.80 $\pm$ 0.40 &  &  &  \\
LaMBO-2 &  & 0.19 $\pm$ 0.17 & 0.36 $\pm$ 0.15 & 0.95 $\pm$ 0.02 \\
CbAS & 2.20 $\pm$ 0.40 & 0.57 $\pm$ 0.12 & 0.61 $\pm$ 0.10 & 0.98 $\pm$ 0.01 \\
GA &  & 0.45 $\pm$ 0.12 & 0.61 $\pm$ 0.10 & 0.98 $\pm$ 0.01 \\
VSD  & \textbf{0.00 $\pm$ 0.00} &  0.19 $\pm$ 0.17  &  0.32 $\pm$ 0.09  &  0.97 $\pm$ 0.00 \\
GenBO  &  0.20 $\pm$ 0.40  & \textbf{0.00 $\pm$ 0.00} & \textbf{0.28 $\pm$ 0.16} & \textbf{0.94 $\pm$ 0.02} \\
\bottomrule
\end{tabular}
\caption{Final average regret (lower is better) for the best-performing variant of each method across the ALOHA (text optimization) and Ehrlich benchmarks}
\label{tab:results}
\end{table}

\begin{table}[t]
\centering
\begin{tabular}{lll}
\toprule
 & FoldX (Stability) & FoldX (SASA) \\
\midrule
Random mut. & 2.79 $\pm$ 0.22 & 12550.26 $\pm$ 56.34 \\
LaMBO-2 & 3.19 $\pm$ 0.58 & 12456.10 $\pm$ 126.64 \\
CbAS & 3.65 $\pm$ 0.23 & 12376.65 $\pm$ 298.30 \\
VSD  & \textbf{4.20 $\pm$ 0.42} &  12537.97 $\pm$ 186.35 \\
GenBO  &  3.28 $\pm$ 0.35  & \textbf{13285.42 $\pm$ 221.60} \\
\bottomrule
\end{tabular}
\caption{FoldX average maximum outcome for the best-performing variant of each method}
\label{tab:results-foldx}
\end{table}

\begin{table}[t]
\centering
\begin{tabular}{ll}
\toprule
Method & Average Runtime \\
\midrule
CbAS & 53.38 s $\pm$ 2.05 s \\
VSD & 42.89 s $\pm$ 2.56 s \\
GenBO & \textbf{14.88 s $\pm$ 0.26 s} \\
\bottomrule
\end{tabular}
\caption{Average run times with standard deviations for the \texttt{ALOHA} text optimization problem.}
\label{tab:runtime}
\end{table}

\subsection{Ablations}
\label{sec:ablation}
We performed ablation studies on the annealing scheme and the evaluations batch size. We vary the minimum and maximum percentile for the threshold annealing settings of both GenBO (with PI utility) and VSD on the text optimization problem in \autoref{fig:annealing}. Plots reveal a preference towards a more exploitative behavior for this simpler optimization problem. In \autoref{fig:batchsize-ablation}, we vary the evaluations batch size $\nbatch$ for GenBO on the Ehrlich benchmark problem of sequence length 32. As expected, larger evaluation batches lead to lower regret, though with higher variability across the candidates.

\begin{figure}[t]
    \centering
    \subcaptionbox{GenBO}{\includegraphics[width=0.475\linewidth]{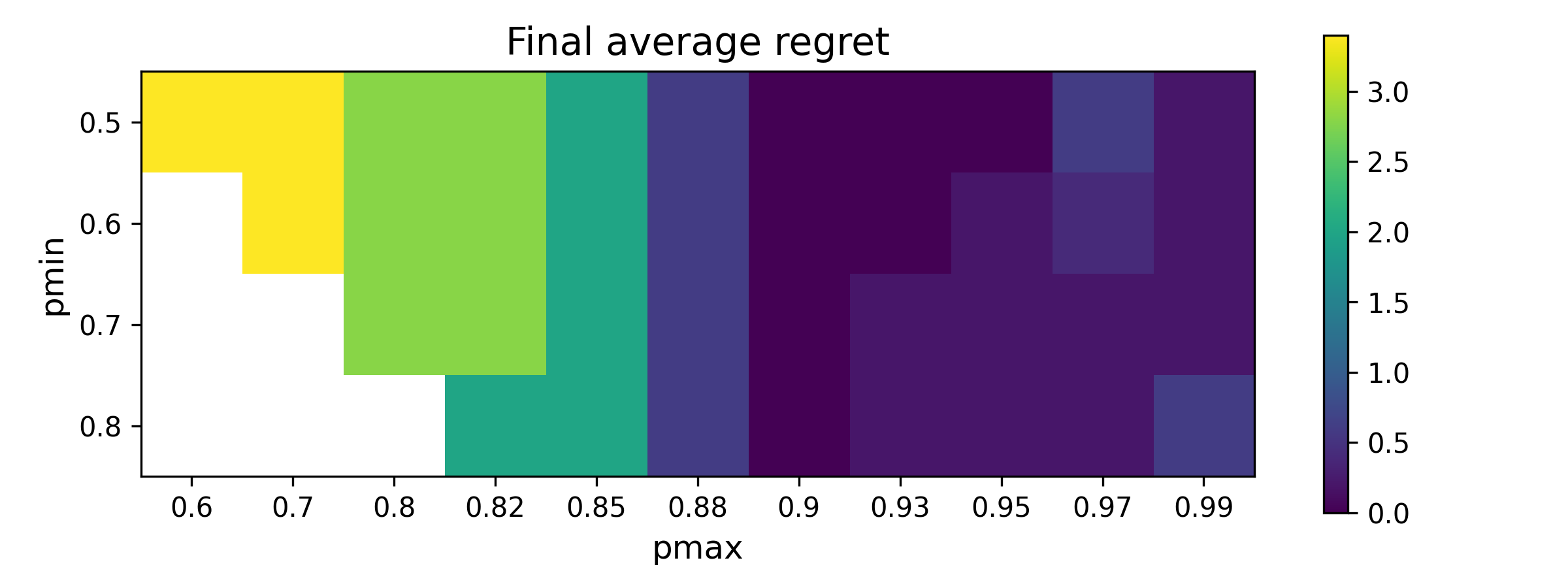}}\subcaptionbox{VSD}{\includegraphics[width=0.45\linewidth]{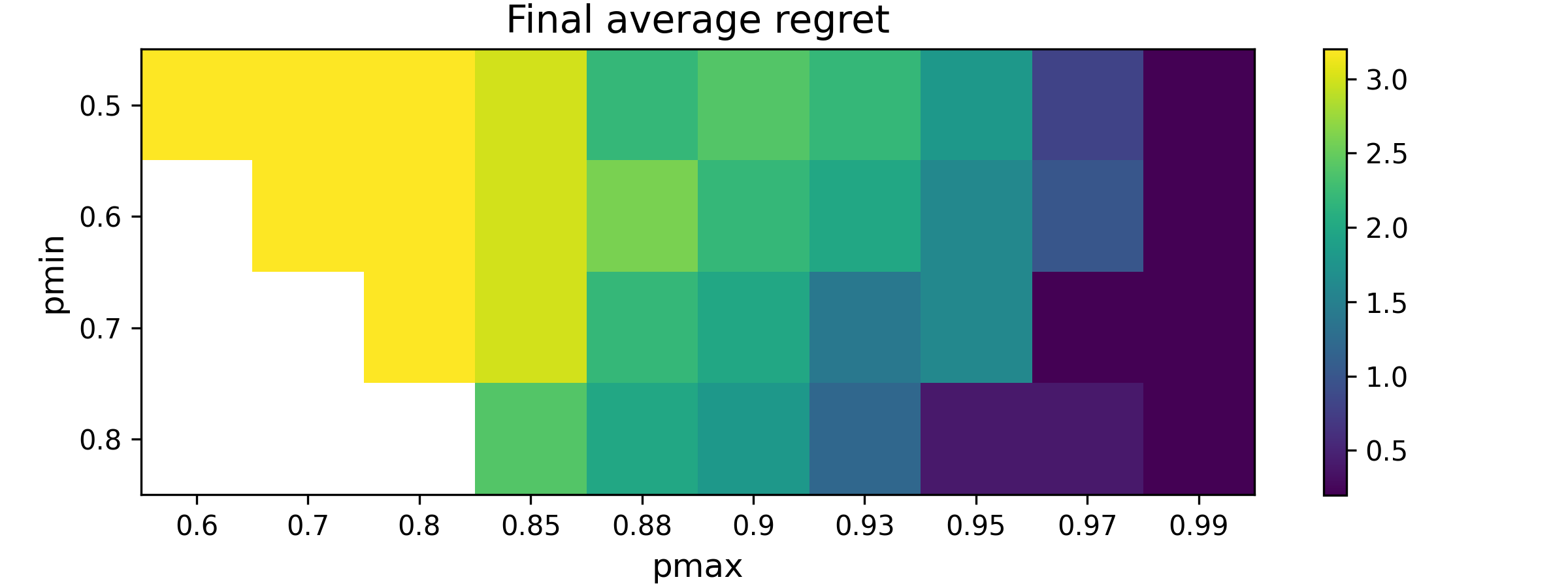}}
    \caption{Final average simple regret (the lower the better) for GenBO and VSD as a function of the minimum and maximum percentile in the annealing schedule for the text optimization problem.}
    \label{fig:annealing}
\end{figure}

\begin{figure}[t]
    \centering
    \includegraphics[width=0.35\linewidth]{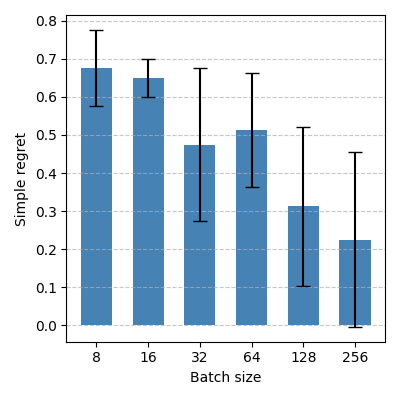}
    \caption{Batch evaluation size $\nbatch$ ablation on Ehrlich benchmark of length 32. The plot presents the final average simple regret for each $\nbatch$ setting, with error bars corresponding to $\pm 1$ standard deviation. All variants were run for the same number of BO rounds as in the original experiment.}
    \label{fig:batchsize-ablation}
\end{figure}

\end{document}